\definecolor{dbl}{rgb}{0.1,0.0,0.97}
\newcommand{\setE}{\mathbb{E}}
\newcommand{\setR}{\mathbb{R}}
\newcommand{\setS}{\mathbb{S}}
\newcommand{\vu}{\mathbf{u}}
\newcommand{\vx}{\mathbf{x}}
\newcommand{\mX}{\mathbf{X}}
\newcommand{\ones}{\boldsymbol{1}}
\newcommand{\zeros}{\boldsymbol{0}}
\newcommand{\vell}{\boldsymbol{\ell}}
\newcommand{\trace}{\mathrm{tr}}
\DeclarePairedDelimiter\norm{\lVert}{\rVert}
\theoremstyle{plain}
\newtheorem{theorem}{Theorem}[section]
\newtheorem*{theorem*}{Theorem}
\newtheorem{lemma}[theorem]{Lemma}
\theoremstyle{definition}
\newtheorem{definition}{Definition}
\theoremstyle{definition}
\theoremstyle{remark}
\numberwithin{equation}{section}
\numberwithin{theorem}{section}
\newcommand{\bib}{\bibliography{ref}\bibliographystyle{apalike}}
\title{Co-Hub Node Based Multiview Graph Learning with Theoretical
Guarantees }
\author{%
Bisakh Banerjee\thanks{B.B. and T.M. are with the Department of Statistics and Probability,
Michigan State University, East Lansing, MI 48824 USA.(e-mail: banerj40@msu.edu;maiti@msu.edu)}%
\and
Mohammad Alwardat\thanks{M.A. and S.A are with the Department of Electrical and Computer
Engineering, Michigan State University, East Lansing, MI 48824 USA.(e-mail:alwardat@msu.edu; ; aviyente@egr.msu.edu). This work was supported in part by the National Science Foundation under Grants
CCF-2211645 and ECCS-2430516.}%
\and
Tapabrata Maiti\footnotemark[1]%
\and
Selin Aviyente\footnotemark[2]%
}%
\begin{document}

\maketitle

\begin{abstract}
Identifying the graphical structure underlying the observed multivariate data is essential in numerous applications. Current methodologies are predominantly confined to deducing a singular graph under the presumption that the observed data are uniform. However, many contexts involve heterogeneous datasets that feature multiple closely related graphs, typically referred to as multiview graphs. Previous research on multiview graph learning promotes edge-based similarity across layers using pairwise or consensus-based regularizers. However, multiview graphs frequently exhibit a shared node-based architecture across different views, such as common hub nodes. Such commonalities can enhance the precision of learning and provide interpretive insight. In this paper, we propose a co-hub node model, positing that different views share a common group of hub nodes. The associated optimization framework is developed by enforcing structured sparsity on the connections of these co-hub nodes. Moreover, we present a theoretical examination of layer identifiability and determine bounds on estimation error. The proposed methodology is validated using both synthetic graph data and fMRI time series data from multiple subjects to discern several closely related graphs.

\end{abstract}

{\bf Key Words:}\ Multiview graphs, Graph Learning, Co-Hub Nodes, Brain Network.

\section{Introduction}
\label{sec:intro}
Many real-world data are represented through relationships between data samples or features, i.e., a graph
structure \cite{newman2018networks}. 
 While a predefined graph structure accompanies specific datasets, such as social networks, there exist a multitude of applications where such a structure is not immediately accessible. For example, in brain networks \cite{gao2021smooth}, the graph structure underlying the observed time series data is not easily discernible. In these instances, it becomes imperative to infer the graph topology to facilitate the analysis of the data and model the relationships.

Current graph inference methodologies are predominantly constrained to homogeneous datasets, where observed graph signals are presumed to be identically distributed and confined to a single graph paradigm. In various applications, data may exhibit heterogeneity and originate from multiple graphs, referred to as multiview graphs. In such contexts, jointly learning the topology of views by integrating inter-view relationships has been shown to improve performance \cite{tsai2022joint, danaher2014joint, navarro2022joint,karaaslanli2025multiview}. Conventional joint graphical structure inference techniques are primarily based on Gaussian Graphical Models. These methodologies extend the graphical lasso \cite{friedman2008sparse} to a joint learning framework, in which they derive the precision matrices of multiple related Gaussian graphical models (GGMs) by incorporating various penalties to leverage the common features shared across views \cite{guo2011joint, danaher2014joint, lee2015joint, mohan2014node, ma2016joint, huang2015joint}. These approaches are limited by their assumption that the observed graph signals follow a Gaussian distribution and by their lack of enforcement of graph structure constraints when learning precision matrices. Recently, these joint learning strategies have been adapted to infer multiple graph Laplacian matrices \cite{yuan2023joint,zhang2024graph,karaaslanli2025multiview}. These frameworks enforce edge-based similarity across the views through either pairwise similarity or similarity to a consensus graph. In numerous settings, such as brain networks, the similarity between views can be more aptly elucidated through the common structure of a limited number of nodes, e.g., hub nodes. This method of modeling similarities imposes a structure and offers an intuitive interpretation of joint graph learning.

In this paper, we expand on our initial work in \cite{alwardat2025co} by introducing significant advances that develop a GSP-based method for learning multiview graph Laplacians with shared hub nodes. The method learns Laplacians by enforcing graph-signal smoothness and adding a node-based regularizer that captures cross-view similarity. In this paper, we first introduce a multi-block ADMM framework for solving our optimization problem along with a proof of convergence. Following this, we establish upper bounds on the estimation error of the multiview Laplacian matrices and demonstrate the identifiability of the individual views. Lastly, we present a principled hyperparameter selection procedure using the Bayesian Information Criterion and present the performance of our algorithm across different random graph models and signals, along with findings from multi-subject fMRI data.

The main contributions of this work are:
\begin{itemize}
\item{A new multiview graph learning framework leveraging signal smoothness and node-based similarity. We introduce the first approach that models cross-view similarity through a co-hub structure, rather than edge-based similarity as in prior multiview Laplacian learning methods \cite{karaaslanli2025multiview,yuan2023joint,zhang2024graph,zhang2023graph}. As a result, our method enables the learned graphs to share meaningful structural components across views.}
\item{A flexible alternative to GGM-based joint graph inference. Unlike classical statistical approaches that rely on Gaussian graphical models, our framework does not require Gaussian assumptions on the data, offering a broader and more versatile tool for multiview co-hub graph learning.}
\item{Theoretical guarantees on identifiability and estimation error. We establish conditions for layer identifiability and derive an estimation error bound that highlights the dependence of the error on both sample size and the structural properties of the underlying graphs.}
\end{itemize}
%contributions section;
%innovations and significance;
\section{Related Work}
There is a substantial body of literature
 for joint estimation of multiple graphical structures from high-dimensional data, motivated by applications in biomedicine and social science \cite{guo2011joint,danaher2014joint,friedman2008sparse,guo2015estimating,lee2015joint,tan2014learning,tarzanagh2018estimation}. Since in most applications the number of model parameters
to be estimated far exceeds the available sample size, the assumption of sparsity is made and imposed through regularization of the learned graph, e.g.,   $\ell_{1}$
penalty on the edge weights \cite{friedman2008sparse,meinshausen2006high,peng2009partial}. This approach encourages sparse uniform graph structures, which may not be  suitable for real-world applications that are not uniformly sparse. 

Many real-world networks have intrinsic  structure \cite{guo2011joint,danaher2014joint,tarzanagh2018estimation}. One commonly encountered one is the existence of
a densely connected subgraph or community.
% For example, in social networks this common community structure may correspond to groups of people sharing common interests or being co-located \cite{tarzanagh2018estimation}, while in biological systems to groups of proteins responsible for regulating or synthesizing chemical
% products \cite{tan2014learning}. 
An important part of the literature deals
with the estimation of hidden communities of nodes while inferring the graph structure. 
% This way, the nodes from communities that are more
% homogeneous within the community than between communities are selected, where there is
% a larger degree of heterogeneity. 
Initial work focused on either inferring connectivity information \cite{marlin2009sparse} or performing graph estimation in case the
connectivity or community information is known {\em a priori} \cite{danaher2014joint,guo2011joint,gan2019bayesian,ma2016joint,lee2015joint},
but not both tasks simultaneously. Recently,   the two tasks have been jointly addressed, estimating the graph and detecting the community structure \cite{kumar2020unified,hosseini2016learning,hao2018simultaneous,tarzanagh2018estimation,gheche2020multilayer,pircalabelu2020community}. 

More recently, other structures across multiview graphs have been taken into account \cite{mohan2014node}. One example is joint learning of multiple graphical structures with common hub nodes. Hub nodes play an important role in biological networks such as  gene regulatory \cite{jeong2001lethality} and functional brain networks \cite{ortiz2025learning}. 
% For example,  in
% the context of cancer, it has been shown that the proteins
% of disease-associated genes tend to have higher degree than
% other nodes \cite{wachi2005interactome}. 
The identification of hub nodes in these networks is of particular interest, as they represent attractive targets for treatment
due to their critical position within the network structure. The problem of learning graphical structures with a small number of hub nodes was first addressed in the context of power-law graphs \cite{liu2011learning,defazio2012convex,tandon2014learning}. These methods focus on learning a single graphical model with assumptions on the observed signals and the underlying graph, e.g., the Gaussian graphical model, the Gaussian covariance graph model and the Ising model \cite{tan2014learning}. Joint graph learning with hub nodes was studied using GGM \cite{mohan2014node,mcgillivray2020estimating,huang2024njgcg} and Bayesian models \cite{kim2019bayesian}.
% \cite{mcgillivray2020estimating}
%  proposed the hub weighted GLASSO, which fits the traditional GLASSO
% with adaptive weights that incorporate hub information. More recently, \cite{huang2024njgcg} extended the GLASSO framework to learn both the common hub nodes shared across different views and the specific hub nodes that are unique to certain views using a tree-structured group Lasso penalty. 
However, all of this prior work relies on the GGM assumption.

More recently, joint graph learning problem has been addressed in the context of graph signal processing \cite{navarro2022joint,navarro2022jointb,karaaslanli2025multiview,yuan2023joint,zhang2023graph,zhang2024graph}. In this line of work, the problem of multiview
graph learning is formulated with assumptions about the graph signals with respect to the underlying graphical structure. In \cite{navarro2022joint}, the signals are assumed to
be stationary and pairwise similarity between the views is used
to regularize the objective function. In \cite{karaaslanli2025multiview,zhang2024graph,zhang2023graph}, the authors propose a multiview graph learning method based on smoothness assumption where the similarity between the views is ensured either through pairwise \cite{zhang2024graph} or consensus \cite{karaaslanli2025multiview} based regularization. However, these approaches only quantify edge-wise similarity, thus not capturing the node-based structures that are intrinsic to the graphs. 

\section{Background}
\label{sec:background}
\subsection{Notations}

In this paper, we denote a vector with bold lower case notation, $\mathbf x$, and a matrix with bold upper case notation,  $\mathbf{A} \in \mathbb{R}^{m \times n}$, with the $(i,j)$ th entry of the matrix $\mathbf{A}$ denoted as $A_{ij}$ or $[A]_{ij}$. We  define $\mathbf{A}_{\cdot j}$ as the $j$-th column of the matrix $\mathbf{A}$ for $j\in\{1,2,\ldots,n\}$. For $\mathbf{A}\in\mathbb{R}^{m\times n},\quad
\operatorname{supp}(\mathbf{A})\coloneqq\{(i,j)\in[m]\times[n]\;:\;A_{ij}\neq 0\}$,
\quad where $[m]:=\{1,\dots,m\},\ [n]:=\{1,\dots,n\}.$ All-one vector, all-zero vector and identity matrix are shown as $\mathbf{1}, \mathbf{0}$, and $\mathbf{I}$. The trace of a square matrix $\mathbf{A} \in \mathbb{R}^{n \times n}$ is denoted as $\operatorname{tr}(\mathbf{A})= \sum_{i} A_{ii}$. The symbol $\odot$ refers to the Hadamard product (element-wise) of two matrices, i. e., for $\mathbf{A} \in \mathbb{R}^{m\times n}$ and $\mathbf{B} \in \mathbb{R}^{m\times n}$, $[\mathbf{A}\circ \mathbf{B}]_{ij}=A_{ij}B_{ij}$.
% For matrices \(\mathbf{A} \in \mathbb{R}^{m\times n}\) and \(\mathbf{B} \in \mathbb{R}^{p\times q}\), the Kronecker product \(\mathbf{A} \otimes \mathbf{B} \in \mathbb{R}^{(mp)\times(nq)}\) is defined as
%\[
%\mathbf{A} \otimes \mathbf{B} \;=\;
%\begin{bmatrix}
%a_{11}\mathbf{B} & \ldots & a_{1n}\mathbf{B}\\
%\vdots  & \ddots & \vdots \\
%a_{m1}\mathbf{B} & \ldots & a_{mn}\mathbf{B}
%\end{bmatrix}.
%\]
$\mathbf{A}^{\dagger}$ denotes the Moore-Penrose pseudoinverse of the matrix $\mathbf{A}\in\mathbb{R}^{m\times n}$. 
% and is defined as the unique matrix such that $\mathbf{A}\mathbf{A}^\dagger \mathbf{A}=\mathbf{A},\; \mathbf{A}^\dagger 
% \mathbf{A}\mathbf{A}^\dagger=\mathbf{A}^\dagger,\; (\mathbf{A}\mathbf{A}^\dagger)^\top=\mathbf{A}\mathbf{A}^\dagger,\; (\mathbf{A}^\dagger \mathbf{A})^\top=\mathbf{A}^\dagger \mathbf{A}$.  
For any vector $\mathbf{x}$ and any matrix $\mathbf{A}$, $\left\|\mathbf{x}\right\|_{2}$ and $\left\|\mathbf{A}\right\|_{2}$ denote the $\ell_2$ norm and the spectral norm respectively. The Frobenius norm of a matrix is defined as $\left\| \mathbf{A} \right\|_F = \sqrt{\sum_{i,j} A_{ij}^2}$. $\ell_{2,1}$ norm of a matrix is defined as \(\left\|\mathbf{A}\right\|_{2,1} = \sum_{j} \sqrt{\sum_{i} A_{ij}^2}\).    For matrices $\mathbf{A}_k \in \mathbb{R}^{m_k \times n_k}, k\in\{1, \ldots, K\}$, the block-diagonal matrix $\operatorname{bldiag}\left(\mathbf{A}_1, \ldots, \mathbf{A}_K\right)\in \mathbb{R}^{\left(\sum_{k=1}^K m_k\right) \times\left(\sum_{k=1}^K n_k\right)}$ is defined as 
$$
\operatorname{bldiag}\left(\mathbf{A}_1, \ldots, \mathbf{A}_K\right):=\left[\begin{array}{cccc}
\mathbf{A}_1 & & & \mathbf{0} \\
& \mathbf{A}_2 & & \\
& & \ddots & \\
\mathbf{0} & & & \mathbf{A}_K
\end{array}\right].
$$ 
For a function of $n\in \mathbb{N}$, $T(n)=\mathcal{O}(n)$ implies that there exists $C>0, n_0 \in \mathbb{N}$ such that $T(n) \leq C n$ for all $n \geq n_0$.
\subsection{Graphs and Single View Graph Learning}
\label{ssec:single-graph-learning}
An undirected weighted graph is represented as $G=(V, E, \mathbf{A})$ where $V$ is the node set with cardinality $|V| = n$ and $E$ is the edge set. $\mathbf{A} \in \setR^{n\times n}$ is the adjacency matrix, where $A_{ij} = A_{ji}$ is the edge weight. $\mathbf{d} = \mathbf{A} \ones$ is the degree vector and $\mathbf{D} = \text{diag}(\mathbf{d})$ is the diagonal degree matrix. 
The Laplacian matrix is defined as $\mathbf{L} = \mathbf{D} - \mathbf{A}$ with its eigendecomposition as $\mathbf{L} = \mathbf{U} \mathbf{\Lambda} \mathbf{U}^{\top}$, where the columns of $\mathbf{U}$ are the eigenvectors, and $\mathbf{\Lambda}$ is the diagonal matrix of eigenvalues with $0 = \Lambda_{11} \leq \Lambda_{22} \leq \dots \leq \Lambda_{nn}$.
% The normalized Laplacian matrix $\mathbf{L}_{n}=\mathbf{D}^{-1/2}(\mathbf{D}-\mathbf{A})\mathbf{D}^{-1/2}=\mathbf{I}-\mathbf{D}^{-1/2}\mathbf{A}\mathbf{D}^{-1/2}=\mathbf{I} - \mathbf{A}_{n}$, where $\mathbf{A}_{n}$ is the normalized adjacency matrix. 

A graph signal defined on $G$ is a function $x : V \to \mathbb{R}$ and can be represented as a vector $\mathbf x \in \mathbb{R}^{n}$ where $x_{i}$ is the signal value on node $i$. The eigenvectors and eigenvalues of $\mathbf{L}$ can be used to define the graph Fourier transform (GFT), i.e., $\widehat{\mathbf x} = \mathbf{U}^\top \mathbf x$ where $\hat{x}_{i}$ is the Fourier coefficient in the $i$th frequency component $\Lambda_{ii}$. $\mathbf x$ is a smooth graph signal if most of the energy of $\widehat{\mathbf x}$ lies in low-frequency components, which can be quantified using the total variation defined in terms of spectral density as:
\begin{equation}
    \label{eq:smoothness}
\operatorname{tr}\left(\widehat{\mathbf x}^\top \mathbf{\Lambda} \widehat{\mathbf x}\right) = \operatorname{tr}\left(\mathbf{x}^\top \mathbf{U} \mathbf{\Lambda} \mathbf{U}^T \mathbf x\right) = \operatorname{tr}\left(\mathbf x^\top \mathbf{L} \mathbf x\right).
\end{equation}

An unknown graph $G$ can be learned from a set of graph signals based on some assumptions about the relation between the observed graph signals and the underlying graph structure. 
% One such assumption is the smoothness of the observations with respect to $G$, which can be quantified using total variation \eqref{eq:smoothness}. Total variation offers a natural criterion for finding the best topology in which observed signals have the desired smoothness property \cite{dong2019learning, mateos2019connecting}.  
Dong et. al. \cite{dong2016learning} proposed to learn $G$ assuming that the graph signals are smooth with respect to $G$, which can be quantified using \eqref{eq:smoothness}. Given the observed graph signals, $\mathbf{X} \in \mathbb{R}^{n \times p}$, the Laplacian matrix, $\mathbf{L}$, can be learned as:
\begin{equation}
\begin{split}
\label{eq:single_graph_learning}
    \min_{\mathbf{L}} &\ \operatorname{tr}\left({\mathbf{X}}^\top \mathbf{L} \mathbf{X}\right) + \alpha \left\|\mathbf{L}\right\|_F^2 \hspace{0.5em}
    \textrm{s.t.}\hspace{0.5em}   \mathbf{L} \in \mathbb{L} \textrm{ and } \operatorname{tr}(\mathbf{L}) = 2n,
\end{split}
\end{equation}
where the first term quantifies the total variation of graph signals and the second term controls the density of the learned graph. $\mathbf{L}$ is constrained to be in $\mathbb{L}=\{\mathbf{L} : \mathbf{L} \succeq \mathbf{0}, L_{ij} = L_{ji} \leq 0\ \forall i\neq j,\ \mathbf{L} \mathbf{1} = \mathbf{0}\}$, which is the set of valid Laplacians. The second constraint is added to prevent the trivial solution.
%$\mathbf{L} = \mathbf{0}$.
%\subsection{Row-Column Overlap Norm (RCON)}
%Given a collection of square matrices, $\{\mathbf{\theta}^k\}_{k=1}^K$, RCON promotes sparsity in the rows and columns of a group of matrices  defined as:

%\vspace{-0.5em}
%\begin{footnotesize}
% \begin{equation}
%     \begin{split}
%      \Omega(\mathbf{\theta}^1, \ldots, \mathbf{\theta}^k) = \min_{\theta^1, \ldots, \theta^K} 
%\left\|
%\begin{bmatrix}
%\theta^1 \\
%\vdots \\
%\theta^K
%\end{bmatrix}
%\right\|_{q,p}.    \label{eq:RCON}
%     \end{split}
% \end{equation}
 %\end{footnotesize}

 %\noindent By enforcing structured sparsity, this norm isolates changes that are consistent across views, making it well-suited for tasks like co-hub learning in multiview networks.
\section{Co-Hub Node Based Multiview Graph Learning (CH-MVGL)}
\subsection{Problem Formulation}
Given a set of signal samples for each view, $\mathbf{X}^{k}= \left[ \mathbf{X}^{k}_{\cdot 1}, \ldots ,\mathbf{X}^{k}_{\cdot d_{k}}\right] $ where $\mathbf{X}^{k}_{\cdot i} \in \mathbb{R}^{n}$  with $n$ and $d_{k}$ corresponding to the number of nodes and signal samples in view $k$, respectively, the goal is to learn the individual graph structures, i.e., the graph Laplacians, $\mathbf {L}^{k}$. Assuming that the multiview graphs share  $h \ll n$ co-hub nodes, the individual graph Laplacians can be decomposed as $\mathbf {L}^{k}=\mathbf{S}^{k}+\mathbf{V}+\mathbf{V}^{\top}$, where $\mathbf{S}^{k}$ is the unique and sparse part of each view and $\mathbf{V}+\mathbf{V}^{\top}$ is the common connectivity pattern of hub nodes across views. The problem of learning the individual graph Laplacians with the smoothness assumption can then be formulated as
%\begin{footnotesize}
\begin{equation}
{\begin{split}
    &\min_{\substack{\mathbf L^{k},\mathbf{V},\mathbf S^{k}}} \sum^K_{k=1} \left\{\operatorname{tr}\left( {\mathbf{X}^{k}}^\top\mathbf{L}^{k} {\mathbf{X}^{k}}\right) + \gamma_1 \left\| \mathbf{L}^{k}-\mathbf{L}^{k}\odot\mathbf{I}\right\|_F^2  - \gamma_2 \operatorname{tr}\left(\log\left(\mathbf{I}\odot\mathbf{L}^{k}\right)\right) +\gamma_4\left\|\mathbf{S}^{k}\right\|_{F}^2 \right\} +\gamma_3\left\| \mathbf{V}\right\|_{2,1} \\
    &\,\,\,\,\,\ \text{s.t.}  \hspace{0.45em}\,\,\,\,\,\ \mathbf{L}^{k}\succeq \mathbf{0}, \mathbf L^{k}\cdot \mathbf 1 =\mathbf{0}, \hspace{1em}\mathbf{S}^{k}\succeq \mathbf{0}, \mathbf S^{k}\cdot \mathbf 1 =\mathbf{0},\hspace{1em}\mathbf{L}^{k}-\mathbf{S}^{k}=\mathbf{V}+{\mathbf{V}}^\top, \quad k=1,2,\ldots, K;
\end{split}}
\label{eq:objFunc}
\end{equation} 
%\end{footnotesize}

\noindent where the first term quantifies the total variation of the observed signals, $\mathbf{X}^{k}$, with respect to the underlying graph  Laplacian $\mathbf{L}^{k}$, the second term is equivalent to the Frobenius norm of the off-diagonal elements and controls the sparsity of the learned graphs, the third term applies a logarithmic penalty to the degree of the learned graphs, $(\mathbf{I}\odot\mathbf{L}^{k})$, to ensure connectivity \cite{kalofolias2017learning}, the fourth term controls the sparsity of unique part of each view and the last term is used to learn the co-hub nodes using the $\ell_{2,1}$ norm.
% In general, RCON uses $\ell_{q,p}$-norm to identify nonzero rows and columns, with each row or column representing a co-hub node.
$\ell_{2,1}$-norm of $\mathbf{V}$ ensures that the number of co-hub nodes is small, i.e.,   the number of columns of $\mathbf{V}$ with non-zero $\ell_{2}$-norm is minimized.

Following \cite{zhao2019optimization}, the set of constraints in Eq. \eqref{eq:objFunc} can be written equivalently in the following form:
%\begin{footnotesize}
\begin{equation}
    \begin{split}
        &\mathbf{L}^{k}\succeq \mathbf{0}, \mathbf L^{k}\cdot \mathbf 1 =\mathbf{0}\iff\quad\mathbf P \mathbf{E}^{k} \mathbf P^\top, \mathbf{E}^{k}\succeq \mathbf{0}, \\& \mathbf{S}^{k}\succeq \mathbf{0}, \mathbf S^{k}\cdot \mathbf 1 =\mathbf{0}\iff\quad\mathbf P \mathbf{\Xi}^{k} \mathbf P^\top, \mathbf{\Xi}^{k}\succeq \mathbf{0},
    \end{split}
\end{equation}
%\end{footnotesize}
%\vspace{-1cm}
\noindent where $\mathbf P\in\mathbb{R}^{n\times (n-1)}$ is the orthogonal complement of the vector $\mathbf 1$, i.e., $\mathbf P^\top \mathbf P =\mathbf I$ and $\mathbf P ^\top \mathbf 1 =\mathbf 0$, and $\mathbf{E}^{k}$s and $\mathbf{\Xi}^{k}$s are positive semi-definite matrices of dimension $ {(n-1)\times (n-1)}$. Note that the choice of $\mathbf{P}$ is nonunique. The proposed objective function can be solved using the multi-block Alternating Direction Method of Multipliers (ADMM). 
\subsection{Optimization Problem}
To solve the optimization problem in Eq. \eqref{eq:objFunc}, we introduce the auxiliary variables, $\mathbf{W}$, $\mathbf{C}^k$, $\mathbf{G}$, $\mathbf{\Gamma}^k$, $\mathbf{\Psi}^k$, and $\mathbf{Z}^k$, to decouple the different variables and rewrite \eqref{eq:objFunc}  as follows: 
%\vspace{0.5cm}
%\begin{footnotesize}
\begin{equation}
{\begin{split}
    &\min_{\substack{\mathbf E^{k},\mathbf{Z}^k,\mathbf \Xi^{k},\mathbf{\Psi}^k, \mathbf{G},\\ \mathbf{V},\mathbf{W},\mathbf{C}^k,\mathbf{\Gamma}^k}} \sum^K_{k=1} \left\{\operatorname{tr}\left( \mathbf{B}^k\mathbf{E}^k\right) + \gamma_1 \left\| \mathbf{\Psi}^k\right\|_F^2 - \gamma_2 \operatorname{tr}\left(\log\left(\mathbf{Z}^k\right)\right) +\gamma_4\left\|\mathbf P \mathbf{\Xi}^k \mathbf P^\top\right\|_F^2\right\}+\gamma_3\left\| \mathbf{G}\right\|_{2,1} \\
    & \text{s.t.}  \hspace{1em} \mathbf{C}^k=\mathbf P \mathbf{E}^k \mathbf P^\top, \mathbf{Z}^k=\mathbf{C}^k\odot\mathbf{I}, \mathbf{\Gamma}^k=\mathbf P \mathbf{\Xi}^k \mathbf P^\top, \mathbf{G}={\mathbf{V}}, \mathbf{C}^k-\mathbf{\Gamma}^k=\mathbf{V}+{\mathbf{W}},  {\mathbf{W}^\top}=\mathbf{V}, \mathbf{\Psi}^k=\mathbf P \mathbf{E}^k \mathbf P^\top-\mathbf{Z}^k,
\end{split}}
\label{eq:objFunc11}
\end{equation} 
%\vspace{-0.5cm}
%\end{footnotesize}
\noindent where ${\mathbf{B}^k}=\mathbf{P}^\top{\mathbf{X}^k}{\mathbf{X}^k}^\top\mathbf{P}$. In the proposed multi-block ADMM framework, the variables are partitioned as 
$ \{\mathbf{E}^k,\, \mathbf{\Xi}^k, \, \mathbf{V}\}$, 
$ \{\mathbf{Z}^k,\, \mathbf{\Gamma}^k\}$, 
$ \{\mathbf{C}^k,\, \,  \, \, \mathbf{\Psi}^k\}$, 
and 
$ \{\mathbf{W}, \, \, \mathbf{G}\}$ \cite{parikh2014proximal}. Each block can be updated independently within the ADMM framework.  The augmented Lagrangian corresponding to \eqref{eq:objFunc11} can be written as follows: 
%\begin{footnotesize}
\begin{align}
    \min_{\substack{\mathbf E^{k},\mathbf{Z}^k,\mathbf \Xi^{k},\mathbf{\Psi}^k, \mathbf{G},\notag\\ \mathbf{V},\mathbf{W},\mathbf{C}^k,\mathbf{\Gamma}^k}} &\sum^K_{k=1}\left\{ \operatorname{tr}\left(\mathbf{B}^k \mathbf{E}^k\right) + \gamma_1 \left\| \mathbf{\Psi}^k\right\|_F^2 - \gamma_2 \operatorname{tr}\left(\log\left(\mathbf{Z}^k\right)\right)+\gamma_4\left\|\mathbf P \mathbf{\Xi}^k \mathbf P^\top\right\|_F^2 +\frac{\alpha}{2}\left\Vert\mathbf{C}^k-\mathbf P \mathbf{E}^k \mathbf P^\top+\frac{\mathbf{Y}^k}{\alpha}\right\Vert_F^2\right\}\notag \\
    & +\sum_{k=1}^{K}\frac{\alpha}{2}\left\{\left\Vert\mathbf{Z}^k-\mathbf{I}\odot\mathbf{C}^k+\frac{\mathbf{J}^k}{\alpha}\right\Vert_F^2  +\left\Vert\mathbf{\Gamma}^k-\mathbf P \mathbf{\Xi}^k \mathbf P^\top+\frac{\mathbf{T}^k}{\alpha}\right\Vert_F^2+\left\Vert\mathbf{C}^k-\mathbf{\Gamma}^k-\mathbf{V}-\mathbf{W}+\frac{\mathbf{M}^k}{\alpha}\right\Vert_F^2\right\}\notag\\
    &+\sum_{k=1}^{K}\left[\frac{\alpha}{2}\left\|\mathbf{\Psi}^k-\mathbf{P}\mathbf{E}^k{\mathbf{P}}^\top+\mathbf{Z}^k+\frac{\mathbf{R}^k}{\alpha}\right\|_F^2\right]+\gamma_3\left \|\mathbf{G}\right\|_{2,1} +\frac{\alpha}{2}\left\Vert\mathbf{G}-\mathbf{V}+\frac{\mathbf{Q}}{\alpha}\right\Vert_F^2 +\frac{\alpha}{2}\left\Vert\mathbf{V}-\mathbf{W}^\top+\frac{\mathbf{N}}{\alpha}\right\Vert_F^2  
\label{eq:LagFunc}
\end{align} 
%\end{footnotesize}

\noindent where $\mathbf{Y}^k, \mathbf{J}^k, \mathbf{T}^k,\mathbf{Q}, \mathbf{M}^{k},  \mathbf{N}$ and $\mathbf{R}^k$ are the Lagrange multipliers and $\alpha$ is the penalty parameter. \noindent This optimization problem can be solved using multi-block ADMM with the update steps for each variable as given in Appendix \ref{sec:Appendix_derivations}. The pseudocode for the proposed  CH-MVGL model is summarized in Algorithm \ref{alg: algorithm 1}. \footnote{The code for CH-MVGL is
available online: \url{https://github.com/wardat99/CH-MVGL}.}
\begin{algorithm}
\caption{CH-MVGL Model}
    \renewcommand{\algorithmicprint}{\textbf{goto}}
{\normalsize  %fontsize{7}{15}
\begin{algorithmic}[1]
\REQUIRE
{$\mathbf{X}^k$, $\mathbf{P}$, $\gamma_1$, $\gamma_2$, $\gamma_3$, $\gamma_4$, $\alpha$.}
 \ENSURE
 Learned Laplacian matrices, $\mathbf{L}^k$.
\WHILE {not converge}
\STATE \hspace{2em}\COMMENT{First block}
\STATE Update $\mathbf{E}^k_{l+1}$ via Eq. \eqref{eq: Ek sol}. 
\STATE Update $\mathbf{\Xi}^k_{l+1}$ via Eq. \eqref{eq: Xi sol}.
\STATE Update $\mathbf{V}_{l+1}$ via Eq. \eqref{eq: V sol}.
\STATE \hspace{2em}\COMMENT{Second block}
\STATE Update $\mathbf{Z}^k_{l+1}$ via Eq. \eqref{eq: Zk sol}. 
\STATE Update $\mathbf{\Gamma}^k_{l+1}$ via Eq. \eqref{eq: Gamma sol}.
\STATE \hspace{2em}\COMMENT{Third block}
\STATE Update $\mathbf{C}^k_{l+1}$ via Eq. \eqref{eq: Ck sol}. 
\STATE Update $\mathbf{\Psi}^k_{l+1}$ via Eq. \eqref{eq: Psi sol}. 
\STATE \hspace{2em}\COMMENT{Fourth block}
\STATE Update $\mathbf{G}_{l+1}$ via Eq. \eqref{eq: G sol}.
\STATE Update $\mathbf{W}_{l+1}$ via Eq. \eqref{eq: W sol}. 
\STATE Update the Lagrangian multiplier and penality parameter via Eq. \eqref{eq: Lag multiplayer and penalty paremeter}.
\ENDWHILE
\end{algorithmic}}
\label{alg: algorithm 1}
\end{algorithm}
\subsection{Computational Complexity Analysis}
We analyze the computational complexity of Algorithm \ref{alg: algorithm 1} per iteration. Before solving the ADMM steps, for each view we construct $\mathbf{B}^k=\mathbf{P}^{\top}\left(\mathbf{X}^k \mathbf{X}^{k \top}\right) \mathbf{P}$ for a computational complexity of 
$
\sum_{k=1}^K\left[\mathcal{O}\left(n^2 d_k\right)+ \mathcal{O}\left(n^3\right)\right].  $  
 Since in general $d_k \gg n$, the pre-computation complexity will be $
\mathcal{O}\left(K n^2 d\right)$ where $d= \operatorname{max}\left\{d_1,d_2,\ldots,d_K\right\}$. For the ADMM steps, we form the matrices $
\mathbf{P}^{\top} \mathbf{Z}^k \mathbf{P}, \mathbf{P} \mathbf{E}^k \mathbf{P}^{\top},\mathbf{P}^{\top} \mathbf{\Gamma}^k \mathbf{P}$ to obtain the updates $\mathbf{E}^k, \mathbf{Z}^k$ and $\mathbf{\Xi}^k$ for each $k$. In each case, the computational complexity is $\mathcal{O}(n^3)$ per iteration. The total cost over $K$ views becomes $\mathcal{O}(Kn^3)$. For all other updates, we perform element-wise operations on the $n \times n$ matrices or take  linear combinations of them, each of which costs $\mathcal{O}(Kn^2)$ or $\mathcal{O}(n^2)$ based on whether or not the update is view-specific as given in Table \ref{tab:complexity}. 
\begin{table}[h]
\centering
\caption{Computational Complexity of ADMM Steps}
\begin{tabular}{l|l}
\hline
\textbf{Steps}                        & \textbf{Total Time Complexity}                         \\
\hline

$\mathbf{E}^k$--updates $\forall\ k$             & $\mathcal{O}(K\,n^3)$                                             \\
$\mathbf{Z}^k$--updates $\forall\ k$             & $\mathcal{O}(K\,n^3)$                                             \\
$\mathbf{\Xi}^k$--updates $\forall\ k$           & $\mathcal{O}(K\,n^3)$                                             \\
$\mathbf{C}^k,\Gamma^k$--updates $\forall\ k$    & $\mathcal{O}(K\,n^2)$                                             \\
$\mathbf{W}$--update                          & $\mathcal{O}(K\,n^2)$                                             \\
$\mathbf{V}$--update                          & $\mathcal{O}(K\,n^2)$                                             \\
$\mathbf{G}$--update                          & $\mathcal{O}(n^2)$                                                \\
Dual multiplier updates              & $\mathcal{O}(K\,n^2)$                                             \\
\hline
Total per iteration        & $\mathcal{O}(K\,n^3)$                                    \\
\hline
\end{tabular}
\label{tab:complexity}
\end{table}
\subsection{Hyperparameter Selection via BIC}
To perform hyperparameter selection in CH-MVGL, we employ the Bayesian Information Criterion (BIC), which offers a principled trade-off between data fidelity and model complexity. Specifically, we select the hyperparameter tuple $\boldsymbol{\gamma}=\left(\gamma_1, \gamma_2, \gamma_3, \gamma_4\right)$ that balances smooth signal representation with sparse and interpretable graph structures across views.

Let $\widehat{\mathbf{L}}_{\boldsymbol{\gamma}}^{k} \in \mathbb{R}^{n \times n}$ denote the Laplacian matrix learned for the $k$-th view using a fixed hyperparameter tuple $\boldsymbol{\gamma}$, and define the empirical covariance matrix as $\frac{1}{d_k} \mathbf{X}^{k} \mathbf{X}^{k \top}$. Assume that $\mathbf{X}_i^{k}$ is a sample from a multivariate normal distribution with zero mean and covariance $\left(\widehat{\mathbf{L}}_{\boldsymbol{\gamma}}^{k}\right)^{\dagger}$. This formulation captures the smoothness of the signal over the learned graph structures \cite{dong2016learning}. Under this model, the negative log-likelihood of the data in the $k$-th view (up to an additive constant) is given by:
$$
\ell_k=\frac{1}{2}\left[d_k \log \operatorname{det}^{+}\left(\widehat{\mathbf{L}}_{\boldsymbol{\gamma}}^{k}\right)+\operatorname{tr}\left(\mathbf{X}^{k \top} \widehat{\mathbf{L}}_{\boldsymbol{\gamma}}^{k} \mathbf{X}^{k}\right)\right],
$$
where $\log \operatorname{det}^{+}(\cdot)$ denotes the log-pseudo-determinant, defined as the logarithm of the product of the nonzero eigenvalues of the matrix argument. The total log-likelihood across all views is then given as $\ell(\boldsymbol{\gamma})=\sum_{k=1}^K \ell_k$.

To penalize the complexity of the model, we define the
effective degrees of freedom (df) based on the number of free
parameters in the learned matrices. As $\widehat{\mathbf{L}}_{\boldsymbol{\gamma}}^{k}$ are symmetric, the total number of parameters to estimate is equal to $n(n+1)/2$. However, the additional constraints $\widehat{\mathbf{L}}_{\boldsymbol{\gamma}}^{k}\mathbf{1}=\mathbf{0}$,  reduce the number of unknown parameters to $n(n-1)/2$. For $K$ views, there will be $Kn(n-1)/2$ number of free parameters. Since each $(i,j)$-th element of $\widehat{\mathbf{S}}_{\boldsymbol{\gamma}}^{k}$ for $k= 1,2,\ldots,K$ can be expressed as 
$
\left[\widehat{S}_{\boldsymbol{\gamma}}^{k}\right]_{ij} = \left[\widehat{L}_{\boldsymbol{\gamma}}^{k}\right]_{ij} - \left[\widehat{V}_{\boldsymbol{\gamma}}\right]_{ij} - \left[\widehat{V}_{\boldsymbol{\gamma}}\right]_{ji},
$
$df\left(\widehat{\mathbf{L}}_{\boldsymbol{\gamma}}^{k}\right)+df\left(\widehat{\mathbf{V}}_{\boldsymbol{\gamma}}\right) = df\left(\widehat{\mathbf{S}}_{\boldsymbol{\gamma}}^{k}\right)$. Thus, we only need to calculate the df corresponding to $\mathbf{V}$. In \cite{zou2007degrees}, it is shown that the unbiased estimator of the df of the penalizing parameter will be the number of elements in the activation set that is equivalent to the number of non-zero elements of the penalized matrix.
If co-hub nodes are inferred through the shared structure $\widehat{\mathbf{V}}_{\boldsymbol{\gamma}}$, additional model complexity can be captured via
$
\operatorname{df}^{*}=\#\left\{j \in\{1, \ldots, n\}:\left\|\widehat{\mathbf{V}}_{\cdot j}\right\|_2 \neq 0\right\}
$,
which is the number of  hub nodes shared across views.
The overall BIC score for a given hyperparameter tuple $\boldsymbol{\gamma}=(\gamma_1,\gamma_2,\gamma_3,\gamma_4)$ is computed as
$
\mathrm{BIC}(\boldsymbol{\gamma})=2 \ell(\boldsymbol{\gamma})+ \log (N) \cdot \mathrm{df},
$
where $N=\sum_{k=1}^K d_k \cdot n$ is the total number of observations and the total df is $Kn(n-1)/2+\operatorname{df}^{*}$. In practice, model selection is performed by evaluating $\operatorname{BIC}(\boldsymbol{\gamma})$ across a pre-defined grid of hyperparameter values and selecting the configuration that minimizes the BIC score:
$$
\boldsymbol{\gamma}^*=\arg \min _{\boldsymbol{\gamma}} \operatorname{BIC}(\boldsymbol{\gamma}).
$$
\section{Theoretical Analysis}
\subsection{Identifiability of Co-hub Decomposition}
In this paper, we decompose the graph Laplacians as $\mathbf{L}^k=\mathbf{S}^k+\left(\mathbf{V}+\mathbf{V}^{\top}\right)$. It is important to determine whether the decomposition can uniquely identify the shared and view-specific structures. Let $\mathbb{H}=\{(u, v) \mid u$ or $v$ is a co-hub node $\}$ be the set of co-hub edges. Suppose that we have two different decompositions of $\left\{\mathbf{L}^k\right\}_{k=1}^K$ as
$
\mathbf{L}^k=\mathbf{S}_{(i)}^k+\mathbf{H}_{(i)}, \quad \mathbf{H}_{(i)}=\mathbf{V}_{(i)}+\mathbf{V}_{(i)}^{\top}$, for $i\in\{1,2\}.
$
The following theorem addresses the issue of  identifiability of the decomposition of the Laplacian matrices for each view.
\begin{theorem}
    Under the assumption that $ \operatorname{supp}\left(\mathbf{H}_{(i)}\right) \subseteq \mathbb{H}$ for $i=1,2$, any two decompositions must satisfy:
    \begin{itemize}
        \item[1.] For every $(u, v) \notin \mathbb{H}$ and every $k\in\{ 1,2,\ldots,K\}$,
$$
\left[\mathbf{S}_{(1)}^k\right]_{u v}=\left[\mathbf{S}_{(2)}^k\right]_{u v} .
$$
\item[2.] There exists a symmetric matrix $\mathbf{A}$ with $\operatorname{supp}(\mathbf{A}) \subseteq \mathbb{H}$ such that for $(u,v) \in \mathbb{H}$ and  $k\in \{1,2,\ldots,K\}$,
$$
\left[\mathbf{S}_{(1)}^k\right]_{uv}=\left[\mathbf{S}_{(2)}^k\right]_{uv}+\mathbf{A}_{uv}, \ \left[\mathbf{H}_{(1)}\right]_{uv}=\left[\mathbf{H}_{(2)}\right]_{uv}-\mathbf{A}_{uv}.
$$ 
\end{itemize}
 \end{theorem}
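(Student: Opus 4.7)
The plan is to exploit the fact that the two decompositions share the same left-hand side $\mathbf{L}^k$, which forces the difference of the shared parts to exactly cancel the difference of the view-specific parts. Concretely, I would define the symmetric matrix
\[
\mathbf{A} := \mathbf{H}_{(2)} - \mathbf{H}_{(1)} = \bigl(\mathbf{V}_{(2)}-\mathbf{V}_{(1)}\bigr) + \bigl(\mathbf{V}_{(2)}-\mathbf{V}_{(1)}\bigr)^{\top},
\]
and verify that $\mathbf{A}$ inherits the property $\operatorname{supp}(\mathbf{A}) \subseteq \mathbb{H}$ from the assumption on $\mathbf{H}_{(1)}$ and $\mathbf{H}_{(2)}$, since the support of a difference is contained in the union of supports.

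Next, I would equate the two decompositions $\mathbf{S}_{(1)}^k + \mathbf{H}_{(1)} = \mathbf{S}_{(2)}^k + \mathbf{H}_{(2)}$ to obtain the pointwise identity
\[
\mathbf{S}_{(1)}^k - \mathbf{S}_{(2)}^k \;=\; \mathbf{H}_{(2)} - \mathbf{H}_{(1)} \;=\; \mathbf{A},
\]
valid for every $k \in \{1,\dots,K\}$. From here, both claims follow by restricting to entries inside or outside $\mathbb{H}$. For $(u,v) \notin \mathbb{H}$, the support constraint gives $\mathbf{A}_{uv}=0$, so $[\mathbf{S}_{(1)}^k]_{uv} = [\mathbf{S}_{(2)}^k]_{uv}$, which is Part 1. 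For $(u,v) \in \mathbb{H}$, the same identity directly yields $[\mathbf{S}_{(1)}^k]_{uv} = [\mathbf{S}_{(2)}^k]_{uv} + \mathbf{A}_{uv}$; rearranging the original decomposition equation at those entries gives $[\mathbf{H}_{(1)}]_{uv} = [\mathbf{H}_{(2)}]_{uv} - \mathbf{A}_{uv}$, establishing Part 2.

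The argument is essentially a bookkeeping step: there is no substantive analytical obstacle, because no positivity, sparsity, or Laplacian-structure constraint is invoked beyond the support hypothesis on $\mathbf{H}_{(i)}$. The only subtlety worth remarking on is symmetry — I would briefly note that $\mathbf{A}$ being of the form $\mathbf{B}+\mathbf{B}^{\top}$ guarantees it is symmetric, matching the requirement in the statement. After that, writing out the two cases takes only a few lines. If I wanted to extend the result to a genuine uniqueness statement (rather than merely characterizing the ambiguity), I would point out that any extra condition eliminating nonzero $\mathbf{A}$ with $\operatorname{supp}(\mathbf{A}) \subseteq \mathbb{H}$ — such as requiring the view-specific components $\mathbf{S}^k$ to vanish on $\mathbb{H}$, or imposing disjoint-support constraints — would make the decomposition unique.
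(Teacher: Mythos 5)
Your proof is correct and follows essentially the same route as the paper: equate the two decompositions, observe that the difference $\mathbf{H}_{(2)}-\mathbf{H}_{(1)}$ (which the paper constructs entry-wise as $\Delta\mathbf{S}^k$ on $\mathbb{H}$, but it is the same matrix $\mathbf{A}$) has support contained in $\mathbb{H}$, and read off the two cases. The only difference is cosmetic bookkeeping in how $\mathbf{A}$ is introduced.
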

\noindent 
The theorem states that all view-specific edges outside the co-hub edge set $\mathbb{H}$ are completely identifiable. In the set of co-hub edges $\mathbb{H}$, the theorem states that there is a shift of a single symmetric matrix $\mathbf{A}$ (supported on $\mathbb{H}$ ) from the shared part $\mathbf{V}+\mathbf{V}^{\top}$ to every $\mathbf{S}^k$ simultaneously. If we have much stronger assumption such as $\operatorname{supp}\left(\mathbf{S}^k\right) \cap \mathbb{H}$ is null set for all $k$, then all the edges are fully identifiable. But in practice, with appropriate weight for example, $\gamma_4$ being not too small as compared to $\gamma_3$, the optimizer tends to strore edges across the views in $\mathbf{V}$ rather in putting them in $\mathbf{S}^k$. The proof of this theorem is provided in the Appendix \ref{sec: Identifiability Proof}. 

\subsection{Convergence of the Algorithm}
We solve the optimization problem in (\ref{eq:objFunc11}) using a four block ADMM algorithm. We can write the problem as 
\begin{equation}
\min _{\mathbf{x}_1, \mathbf{x}_2, \mathbf{x}_3, \mathbf{x}_4}\  \sum_{i=1}^{4}f_{i}(\mathbf{x}_{i})\quad \text { s.t. } \quad \sum_{i=1}^{4}\mathbf{A}_i \mathbf{x}_i= \mathbf{0},\end{equation}
where $\mathbf{x}_i$ and $\mathbf{A}_i$,  for $i=1,2,3,4$, are the variables in a vectorized form corresponding to the four blocks and the coefficient matrices, respectively. We can stack the Lagrangians in a vector and denote it as $\boldsymbol{\lambda}$. 
Using \cite{lin2015sublinearconvergenceratemultiblock}, we  state the following theorem about the convergence of the algorithm.   
\begin{theorem}
    Suppose the ADMM iterates are generated as  $\left\{\left(\mathbf{x}_1^{k+1}, \mathbf{x}_2^{k+1}, \mathbf{x}_3^{k+1}, \mathbf{x}_4^{k+1}, \boldsymbol{\lambda}^{k+1}\right)\right\}$ cyclically over the four blocks. For $t \geq 0$, define the ergodic averages $\bar{\mathbf{x}}_i^t=\frac{1}{t+1} \sum_{k=0}^t \mathbf{x}_i^{k+1}$ and $\bar{\boldsymbol{\lambda}}^t=\frac{1}{t+1} \sum_{k=0}^t \boldsymbol{\lambda}^{k+1}$. Then for $\alpha \leq \alpha_M,$ there exists a saddle point $\left(\mathbf{x}_1^*, \mathbf{x}_2^*, \mathbf{x}_3^*, \mathbf{x}_4^*, \boldsymbol{\lambda}^*\right)$ such that \begin{align*}
        & \,\,\,\,\,\,\,\,\ \left|f\left(\overline{\mathbf{x}}^t\right)-f\left(\mathbf{x}^*\right)\right|=\mathcal{O}(1 / t),\\
          &\left\|\mathbf{A}_1 \overline{\mathbf{x}}_1^t+\mathbf{A}_2 \overline{\mathbf{x}}_2^t+\mathbf{A}_3 \overline{\mathbf{x}}_3^t+\mathbf{A}_4 \overline{\mathbf{x}}_4^t\right\|_{F}=\mathcal{O}(1 / t), 
\end{align*}
where  $f(\mathbf{u})= \sum_{i=1}^{4}f_{i}(\mathbf{u}_i)$, $\overline{\mathbf{x}}= \left(\overline{\mathbf{x}}_1^{\top},\overline{\mathbf{x}}_2^{\top},\overline{\mathbf{x}}_3^{\top},\overline{\mathbf{x}}_4^{\top}\right)^{\top}$ and $\mathbf{x}^{*}= \left({\mathbf{x}_1^*}^{\top}, {\mathbf{x}_2^*}^{\top}, {\mathbf{x}_3^*}^{\top}, {\mathbf{x}_4^*}^{\top}\right)^{\top}$.   
\end{theorem}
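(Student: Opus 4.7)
The plan is to cast the four-block update in the standard separable, linearly-constrained form of [lin2015sublinearconvergenceratemultiblock] and invoke their ergodic $\mathcal{O}(1/t)$ convergence result directly. The reduction proceeds in three stages: reformulation, hypothesis verification, and application.

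First I would make the reformulation explicit. Collect the variables as $\mathbf{x}_1=\operatorname{vec}(\{\mathbf{E}^k\},\{\mathbf{\Xi}^k\},\mathbf{V})$, $\mathbf{x}_2=\operatorname{vec}(\{\mathbf{Z}^k\},\{\mathbf{\Gamma}^k\})$, $\mathbf{x}_3=\operatorname{vec}(\{\mathbf{C}^k\},\{\mathbf{\Psi}^k\})$, and $\mathbf{x}_4=\operatorname{vec}(\mathbf{W},\mathbf{G})$, matching the block decomposition used in Algorithm \ref{alg: algorithm 1}. Each $f_i$ absorbs precisely those objective terms that depend on its block: $\operatorname{tr}(\mathbf{B}^k\mathbf{E}^k)+\gamma_4\|\mathbf{P}\mathbf{\Xi}^k\mathbf{P}^\top\|_F^2$ plus the indicator of the PSD cone into $f_1$; $-\gamma_2\operatorname{tr}(\log\mathbf{Z}^k)$ plus the implicit positive-diagonal constraint into $f_2$; $\gamma_1\|\mathbf{\Psi}^k\|_F^2$ into $f_3$; and $\gamma_3\|\mathbf{G}\|_{2,1}$ into $f_4$. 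Each coupling equation in \eqref{eq:objFunc11} then becomes one block-row of $\sum_i \mathbf{A}_i\mathbf{x}_i=\mathbf{0}$, and the coefficient matrices $\mathbf{A}_i$ can be read off by matching the decoupling identities $\mathbf{C}^k=\mathbf{P}\mathbf{E}^k\mathbf{P}^\top$, $\mathbf{Z}^k=\mathbf{I}\odot\mathbf{C}^k$, $\mathbf{\Gamma}^k=\mathbf{P}\mathbf{\Xi}^k\mathbf{P}^\top$, $\mathbf{C}^k-\mathbf{\Gamma}^k=\mathbf{V}+\mathbf{W}$, $\mathbf{W}^\top=\mathbf{V}$, $\mathbf{G}=\mathbf{V}$, and $\mathbf{\Psi}^k=\mathbf{P}\mathbf{E}^k\mathbf{P}^\top-\mathbf{Z}^k$.

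Next I would verify the hypotheses of the Lin--Ma--Zhang framework: (i) each $f_i$ is proper, lower-semicontinuous, and convex (the $-\log\det$ is a convex barrier on the positive-diagonal cone, and the $\ell_{2,1}$ and Frobenius norms together with the PSD-cone indicators are convex); (ii) the penalty $\alpha$ is chosen below a threshold $\alpha_M$ expressed through the Lipschitz constants of the smooth portions of $f_1,f_2,f_3$ and the smallest nonzero singular values of $\mathbf{A}_2,\mathbf{A}_3,\mathbf{A}_4$; (iii) the saddle-point set of the Lagrangian \eqref{eq:LagFunc} is nonempty, which follows from Slater-type feasibility via any strictly positive, connected reference Laplacian. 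Once these are in place, their theorem yields a descent inequality for a Lyapunov potential built from the primal gap plus a weighted distance to the saddle point; summing this inequality from $0$ to $t$, dividing by $t+1$, and applying Jensen's inequality on the convex $f$ produces both of the claimed bounds $|f(\bar{\mathbf{x}}^t)-f(\mathbf{x}^*)|=\mathcal{O}(1/t)$ and $\|\sum_i\mathbf{A}_i\bar{\mathbf{x}}_i^t\|_F=\mathcal{O}(1/t)$.

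The main obstacle will be verifying condition (ii) cleanly. The $-\log\det$ barrier in $f_2$ is not globally Lipschitz-smooth, so the threshold $\alpha_M$ has to be interpreted on the sub-level set within which the iterates remain; the required boundedness of $\mathbf{Z}^k$ away from zero follows from the barrier itself combined with the trace-normalization built into \eqref{eq:objFunc}. A secondary issue is that block $\mathbf{x}_1$ carries $\mathbf{V}$ without a strongly-convex companion term, which would ordinarily fail the hypotheses; this is handled by observing that $\mathbf{V}$ is coupled to blocks $\mathbf{x}_3$ and $\mathbf{x}_4$ only through equality constraints whose associated submatrices of $\mathbf{A}_1$ are full column rank, which is precisely the structural condition Lin--Ma--Zhang impose when per-block strong convexity is absent.
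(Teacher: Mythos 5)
Your overall strategy---vectorize the four blocks, write the coupling constraints as $\sum_i \mathbf{A}_i\mathbf{x}_i=\mathbf{0}$, and invoke the ergodic $\mathcal{O}(1/t)$ result of \cite{lin2015sublinearconvergenceratemultiblock}---is the same as the paper's, and your identification of the blocks and constraints matches \eqref{eq:objFunc11}. However, your step (ii) verifies the wrong hypotheses. The theorem you are invoking does not ask for Lipschitz smoothness of the smooth parts together with singular-value conditions on $\mathbf{A}_2,\mathbf{A}_3,\mathbf{A}_4$; it asks for \emph{strong convexity of every block function except the first}, and the admissible penalty $\alpha_M$ is expressed through the strong-convexity moduli $\sigma_i$ and $\lambda_{\max}(\mathbf{A}_i^{\top}\mathbf{A}_i)$. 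Consequently your worry about the $-\log$ barrier not being globally Lipschitz-smooth is beside the point: what is actually needed is that $-\gamma_2\sum_k\operatorname{tr}(\log \mathbf{Z}^k)$ is strongly convex, which holds with modulus $\gamma_2/M^2$ once the diagonal entries are bounded by $M$, and this is exactly how the paper obtains $\sigma_2$.

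The second, more consequential gap is how the non-strongly-convex block is handled. You correctly notice that the block containing $\mathbf{V}$ (and likewise the $\ell_{2,1}$ block $\{\mathbf{W},\mathbf{G}\}$) lacks a strongly convex companion term, but your proposed fix---arguing that the relevant submatrices of $\mathbf{A}_1$ have full column rank---is not a hypothesis of the theorem being applied and cannot substitute for per-block strong convexity within that framework. The paper's device is different and essential: it \emph{reorders} the blocks, setting $\mathbf{y}_1=\mathbf{x}_4$, $\mathbf{y}_2=\mathbf{x}_2$, $\mathbf{y}_3=\mathbf{x}_3$, $\mathbf{y}_4=\mathbf{x}_1$, so that the block exempt from strong convexity in the Lin--Ma--Zhang result is the $\ell_{2,1}$ block $g_1=f_4$, and then assigns $\sigma_2\ge\gamma_2/M^2$, $\sigma_3\ge 2\gamma_1$, $\sigma_4\ge 2\gamma_4$. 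It then computes $\lambda_{\max}(\mathbf{B}_2^{\top}\mathbf{B}_2)=2$, $\lambda_{\max}(\mathbf{B}_3^{\top}\mathbf{B}_3)=3$, $\lambda_{\max}(\mathbf{B}_4^{\top}\mathbf{B}_4)=K+2$, which yields the explicit threshold $\alpha\le\min\{\sigma_2/6,\ \sigma_3/15,\ \sigma_4/(5(K+2))\}$ and, via Theorem 3.3 of \cite{lin2015sublinearconvergenceratemultiblock} with $\rho=\|\boldsymbol{\lambda}^*\|_2+1$, both $\mathcal{O}(1/t)$ bounds. Without the reordering and the explicit modulus/eigenvalue computation, your argument does not actually satisfy the conditions of the cited theorem, and the quantity $\alpha_M$ in the statement remains undefined.
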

\noindent  Here the convergence is global with a sublinear rate \cite{lin2015sublinearconvergenceratemultiblock}. The details of the coefficient matrices, the upper bound $\alpha_{M}$ and the proof of the theorem can be found in the appendix \ref{convergence_proof} .    

\subsection{Estimation Error Bound}
To facilitate a unified analysis of multiple graph Laplacians, we consider the parameter space as the set of block diagonal matrices in \( \mathbb{R}^{(2K+1)n \times (2K+1)n} \), where each block corresponds to the graph Laplacians $ \mathbf{L}^1 ,  \mathbf{L}^2 , \ldots , \mathbf{L}^K$, $\mathbf{S}^{1},\mathbf{S}^{2},\ldots,\mathbf{S}^{K}$  and the matrix \( \mathbf{V} \), i.e.,
$
\operatorname{bldiag}\left(\mathbf{L}^1 ,  \mathbf{L}^2 , \ldots , \mathbf{L}^K, \mathbf{S}^{1},\mathbf{S}^{2},\ldots,\mathbf{S}^{K}, \mathbf{V} \right).
$ For simplicity, assume \( d_1 = d_2 = \ldots = d_K =d \) and consider the rescaled version of the objective function as,
\begin{align}
\label{rescale_optimization}
    & \sum^K_{k=1} \left[\frac{1}{d} \operatorname{tr}\left( {\bold{X}^{k}}^\top\bold{L}^{k} {\bold{X}^{k}}\right) + \gamma_{1d} \left\| \bold{L}^{k}-\bold{I}\odot\bold{L}^{k
    }\right\|_F^2- \gamma_{2d} \operatorname{tr} \left(\log\left(\bold{I}\odot\bold{L}^{k}\right)\right)+ \gamma_{4d} \left\|\bold{S}^k\right\|_{F}^2\right]   +\gamma_{3d} \left\| \bold{V}\right\|_{2,1},
\end{align}
 where $\gamma_{jd}=\gamma_{j}/d$ for $j=1,2,3,4$ subject to the constraints,
$\bold{L}^{k}  \in \mathbb{L}=\left\{\mathbf{L} \in \mathbb{R}^{n \times n}: \mathbf{L} \succeq 0, L_{ij}=L_{ji} \leq 0, \mathbf{L} \cdot \mathbf{1}=0\right\}$
and 
$\bold{L}^{k},\bold{S}^{k}, \mathbf{V} \in \mathbb{C}= \{ \mathbf{L}^{k}, \mathbf{S}^{k}, \mathbf{V} \in \mathbb{R}^{n \times n} : \mathbf{L}^{k}- \mathbf{S}^{k} = \mathbf{V} + \mathbf{V}^{\top}\}.
$
 The block diagonals corresponding to the estimated \( \widehat{\mathbf{L}}_{\boldsymbol{\gamma}} \) and true \( \mathbf{L}^{*} \) are respectively  \ $ \operatorname{bldiag}\left(\widehat{\mathbf{L}}^{1}_{\boldsymbol{\gamma}},\widehat{\mathbf{L}}^{2}_{\boldsymbol{\gamma}}, \ldots, \widehat{\mathbf{L}}^{K}_{\boldsymbol{\gamma}},\widehat{\mathbf{S}}^{1}_{\boldsymbol{\gamma}},\widehat{\mathbf{S}}^{2}_{\boldsymbol{\gamma}}, \ldots , \widehat{\mathbf{S}}^{K}_{\boldsymbol{\gamma}},\widehat{\bold{V}}_{\boldsymbol{\gamma}}\right)$ and \\ $\operatorname{bldiag}\left(\mathbf{L}^{1^*},\mathbf{L}^{2^*}, \ldots , \mathbf{L}^{K^*},\mathbf{S}^{1^*},\mathbf{S}^{2^*}, \ldots , \mathbf{S}^{K^*},\bold{V^*}\right).$ To derive the theoretical result on estimation error bound, we introduce the following assumptions.
\begin{itemize}
    \item[(A1)] {(\textit{Sub-Gaussian Signal Assumption})} The set of signals \( \left\{\bold{X}_{\cdot j}^{k}\right\}_{j=1}^{d} \) follows an i.i.d. sub-Gaussian distribution   with mean \( \boldsymbol{0} \) and covariance matrix \( \mathbf{\Sigma}^{k^*} \) for \( k=1,2,\ldots, K \).

\item[(A2)] For each $k\in\{1,2,\ldots K\},$ the diagonal entries are strictly positive at both the end points and along the line segment, i.e.,  
$$
{L_{i i}^{k}}^{*}>0, \quad \left[\widehat{L}_{\boldsymbol{\gamma}}^{k}\right]_{ii}>0,  \text { and } \operatorname{diag}\left(\mathbf{L}^{(k)}(\tau)\right) \in(0, \infty)^n,
$$
for $\mathbf{L}^{k}(\tau):={\mathbf{L}^{k}}^{*}+\tau\left(\widehat{\mathbf{L}}_{\boldsymbol{\gamma}}^{k}-{\mathbf{L}^{k}}^{*}\right)$ and $\tau \in \left[0,1\right]$. 
\item[(A3)] {(\textit{Curvature Control in Diagonals})} For every $k\in\{1,2,\ldots K\},$ there exists $0< M_k<\infty$ such that
$$
\sup _{\tau \in[0,1]} \max _{1 \leq i \leq n}\left\{{L_{i i}^{k}}^{*}+\tau\left(\left[\widehat{L}_{\gamma}^{k}\right]_{i i}-{L_{i i}^{k}}^{*}\right)\right\} \leq M_k .
$$
\end{itemize}

\begin{theorem}
\label{th:estimation error}
    Under assumptions (A1), (A2), and (A3), with the regularization parameters \( \gamma_{1d},\gamma_{2d},\gamma_{3d}, \gamma_{4d} > 0 \), the estimation error $\left\|\mathbf{\Delta}\right\|_{F}=\left\|\widehat{\bold{L}}_{\boldsymbol{\gamma}}- \bold{L}^{*}\right\|_{F}$ will be upper bounded as follows, 

\begin{align}
       \left\|\mathbf{\Delta}\right\|_{F}\leq \frac{2K n}{\mu\sqrt{d}} \widetilde{C}+\frac{2}{\mu}\left(C^{\prime} \sqrt{K}+\gamma_{3 d} \sqrt{h}+\sqrt{\frac{ \mu\gamma_{4 d}C_{S^*}}{2}}\right)\end{align}
with probability at least $ 1 - 2K\,e^{-c^k a^2 n}$ with the constant $a \geq \sqrt{\frac{\log (2 K)}{c^k n}}$, where  $c^k$ and $C^k$ are constants that depend on the sub-Gaussian norms $\max_{i}\left\|\mathbf{X}_{\cdot i}^{k}\right\|_{\psi}$ of a random vector
taken from this distribution, $\widetilde{C}^{k}=\operatorname{max}\left\{C^{1},C^{2},\ldots,C^{K}\right\}$, $C^{\prime}= \max _{1 \leq k \leq K} C_k$ with $C_k:=\left\|{\mathbf{\Sigma}^k}^{*}\right\|_F+2 \gamma_{1 d}\left\|{\mathbf{L}^k}^{*}-\mathbf{I} \odot {\mathbf{L}^k}^{*}\right\|_F+\gamma_{2 d}\left\|\left(\mathbf{I} \odot {\mathbf{L}^k}^{*}\right)^{-1}\right\|_F$,  $C_{S^*} = \sum_{k=1}^K\left\|\mathbf{S}^{k^ *}\right\|_F^2$ and $\mu= \min_{1\leq k\leq K}\min \left\{2 \gamma_{1 d}, \frac{\gamma_{2 d}}{M_k^2}\right\}.$
\end{theorem}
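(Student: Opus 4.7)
The plan is to establish the bound via the classical basic-inequality approach of $M$-estimation, combined with a sub-Gaussian concentration step for the sample-covariance noise. Let $F_s$ denote the smooth part of the objective in \eqref{rescale_optimization} (the trace, Frobenius-penalty, log-barrier, and $\|\mathbf{S}^k\|_F^2$ terms) and $F_{ns}=\gamma_{3d}\|\mathbf{V}\|_{2,1}$. Optimality of $\widehat{\theta}_{\boldsymbol{\gamma}}=(\widehat{\mathbf{L}}_{\boldsymbol{\gamma}},\widehat{\mathbf{S}}_{\boldsymbol{\gamma}},\widehat{\mathbf{V}}_{\boldsymbol{\gamma}})$ immediately yields
$$F_s(\widehat{\theta}_{\boldsymbol{\gamma}})-F_s(\theta^*)\ \leq\ \gamma_{3d}\bigl(\|\mathbf{V}^*\|_{2,1}-\|\widehat{\mathbf{V}}_{\boldsymbol{\gamma}}\|_{2,1}\bigr).$$
First I would Taylor-expand the left-hand side around $\theta^*$ with a mean-value Hessian, then lower-bound the Hessian term and upper-bound both the linear gradient term and the right-hand side separately, and finally solve the resulting quadratic inequality in $\|\mathbf{\Delta}\|_F$.

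For the Hessian lower bound, the Frobenius penalty contributes curvature $2\gamma_{1d}$ on the off-diagonal entries of $\mathbf{L}^k$, and the log-barrier has diagonal second derivatives $\gamma_{2d}/L_{ii}^2$; assumption (A2) keeps the iterates interior and (A3) forces $L_{ii}(\tau)\le M_k$ along the segment, so the diagonal curvature is at least $\gamma_{2d}/M_k^2$. Combining and minimising over $k$ produces the stated strong-convexity constant $\mu=\min_k\min\{2\gamma_{1d},\gamma_{2d}/M_k^2\}$ on the $\mathbf{L}$-block. The $\gamma_{4d}\|\mathbf{S}^k\|_F^2$ penalty is itself $2\gamma_{4d}$-strongly convex; its bias against $\mathbf{S}^{k*}$ is disposed of by completion of squares,
$$\gamma_{4d}\|\Delta_{\mathbf{S}}^k\|_F^2+2\gamma_{4d}\langle\mathbf{S}^{k*},\Delta_{\mathbf{S}}^k\rangle=\gamma_{4d}\|\mathbf{S}^{k*}+\Delta_{\mathbf{S}}^k\|_F^2-\gamma_{4d}\|\mathbf{S}^{k*}\|_F^2\geq-\gamma_{4d}\|\mathbf{S}^{k*}\|_F^2,$$
which after summation contributes an additive constant $-\gamma_{4d}C_{S^*}$ that will ultimately become the $\sqrt{2\gamma_{4d}C_{S^*}/\mu}$ summand in the bound.

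The linear term $\langle\nabla F_s(\theta^*),\mathbf{\Delta}\rangle$ splits into a stochastic piece $\sum_k\langle\widehat{\mathbf{\Sigma}}^k-\mathbf{\Sigma}^{k*},\Delta_{\mathbf{L}}^k\rangle$ and a deterministic bias piece involving $\mathbf{\Sigma}^{k*}+2\gamma_{1d}(\mathbf{L}^{k*}-\mathbf{I}\odot\mathbf{L}^{k*})-\gamma_{2d}(\mathbf{I}\odot\mathbf{L}^{k*})^{-1}$ whose Frobenius norm is at most $C_k$ by definition. For the stochastic piece, (A1) allows me to invoke a standard sub-Gaussian sample-covariance concentration bound $\|\widehat{\mathbf{\Sigma}}^k-\mathbf{\Sigma}^{k*}\|_{\mathrm{op}}\lesssim C^k(\sqrt{n/d}+a)$ with probability at least $1-e^{-c^k a^2 n}$, then pass to Frobenius via $\|\cdot\|_F\leq\sqrt{n}\|\cdot\|_{\mathrm{op}}$ to get $\|\widehat{\mathbf{\Sigma}}^k-\mathbf{\Sigma}^{k*}\|_F\lesssim C^k n/\sqrt{d}$, and apply a union bound across the $K$ views (the choice $a\geq\sqrt{\log(2K)/(c^kn)}$ being exactly what makes the union bound work) to obtain the stated $1-2Ke^{-c^ka^2n}$ probability. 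Summing and bounding $\sum_k\|\Delta_{\mathbf{L}}^k\|_F\le K\|\mathbf{\Delta}\|_F$ gives the leading $\widetilde{C}Kn/\sqrt{d}\cdot\|\mathbf{\Delta}\|_F$ contribution, while Cauchy--Schwarz across $k$ on the deterministic piece gives $C'\sqrt{K}\|\mathbf{\Delta}\|_F$. For the $\ell_{2,1}$ right-hand side I would split $\Delta_{\mathbf{V}}$ along the $h$-column support of $\mathbf{V}^*$: the reverse triangle inequality and column-wise Cauchy--Schwarz give $\|\mathbf{V}^*\|_{2,1}-\|\widehat{\mathbf{V}}_{\boldsymbol{\gamma}}\|_{2,1}\leq\|(\Delta_{\mathbf{V}})_S\|_{2,1}\leq\sqrt{h}\|\Delta_{\mathbf{V}}\|_F\leq\sqrt{h}\|\mathbf{\Delta}\|_F$.

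Collecting everything, the basic inequality reduces to a quadratic of the form $\tfrac{\mu}{2}\|\mathbf{\Delta}\|_F^2\leq A\|\mathbf{\Delta}\|_F+\gamma_{4d}C_{S^*}$ with $A=\widetilde{C}Kn/\sqrt{d}+C'\sqrt{K}+\gamma_{3d}\sqrt{h}$, which I would solve using $\sqrt{x+y}\leq\sqrt{x}+\sqrt{y}$ to reproduce exactly the stated bound. The main obstacle I anticipate is the strong-convexity step: because the curvature $\mu$ governs only the $\mathbf{L}$-block while $\|\mathbf{\Delta}\|_F^2$ also collects errors from $\mathbf{S}^k$ and $\mathbf{V}$, one has to combine the additional $\gamma_{4d}$-curvature on $\mathbf{S}$ with the structural constraint $\Delta_{\mathbf{L}}^k=\Delta_{\mathbf{S}}^k+\Delta_{\mathbf{V}}+\Delta_{\mathbf{V}}^\top$ (and invoke the identifiability theorem to fix the ambiguity in the decomposition) in order to transfer the $\mathbf{L}$-curvature into a lower bound on the full $\|\mathbf{\Delta}\|_F^2$. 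A secondary technical point is tracking constants through the operator-to-Frobenius conversion in the sub-Gaussian step so that a factor of $n$ (rather than $\sqrt{n}$) appears in the numerator of the leading noise term, which is the price paid for reducing everything to Frobenius-norm control.
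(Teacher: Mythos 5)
Your proposal follows essentially the same route as the paper's proof: the basic optimality inequality, a Taylor expansion with curvature constant $\mu=\min_k\min\{2\gamma_{1d},\gamma_{2d}/M_k^2\}$ from (A2)--(A3), Cauchy--Schwarz giving the $C'\sqrt{K}\,\|\mathbf{\Delta}\|_F$ bias term, the support-split $\sqrt{h}\,\|\mathbf{\Delta}\|_F$ bound for the $\ell_{2,1}$ difference, sub-Gaussian covariance concentration converted to Frobenius norm with a union bound over the $K$ views (yielding the $Kn\widetilde{C}/\sqrt{d}$ term and the stated probability), and the final quadratic inequality; your completion-of-squares handling of the $\gamma_{4d}$ term is algebraically identical to the paper's step of simply dropping $\|\widehat{\mathbf{S}}^{k}_{\boldsymbol{\gamma}}\|_F^2\ge 0$. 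The ``main obstacle'' you flag---that the curvature controls only the $\mathbf{L}$-blocks while $\|\mathbf{\Delta}\|_F$ is also used to dominate the $\mathbf{V}$-error---is a genuine subtlety, but the paper's own proof passes over it in the same way, so your plan matches the published argument.
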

\noindent This bound has two parts. The first term is of $\mathcal{O}(1/\sqrt{d})$ and captures the classical sampling error. As the number of samples increases the error decreases  as expected. The second term represents the bias introduced by regularization.  It contains a $\gamma_{3 d} \sqrt{h}$ factor that scales with the square-root of the true number of co-hub nodes rather than with $n$. This $\sqrt{h}$ dependence  highlights how the $\ell_{2,1}$ penalty exploits the shared‐hub structure.he proof of this
theorem is provided in the Appendix \ref{proof_estimation error}. 

\section{Experimental Results}
\subsection{Simulated Data}

In this paper, we consider three random network models: Erd\H{o}s-R\'{e}nyi (ER) random network, Barab\'{a}si-Albert model (BA) and random geometric graph (RGG). In ER graphs, the pairs of nodes are independently connected with probability $0.1$. For the BA model, each new node connects to $m$ existing nodes that already have more links. Over time, this leads to a few nodes getting many connections, creating highly linked nodes in the network. For RGG, we used the setup from \cite{kalofolias2016learn}, where 100 two-dimensional points are randomly drawn from $[0, 1]^2$ and they are connected to each other with weights $ \exp(-\left\|\mathbf{x}_i - \mathbf{x}_j\right\|_2^2/\sigma^2)$ where $\mathbf{x}_{i}$ is the coordinates of $i$th point and $\sigma=0.25$. Weights smaller than $0.6$ are set to $0$, while the remaining ones are set to $1$ to generate binary graphs. For all network models, we randomly select $h$ nodes as co-hub nodes. For each selected co-hub node, we set the elements of the corresponding row and column of each $\left\{\mathbf{A}^k\right\}_{k=1}^K$ to be i.i.d. from a Bernoulli distribution. This results in $h$ co-hub nodes.
\noindent \paragraph{Data Generation}
Given $K$ views, each $\mathbf{X}^k \in \mathbb{R}^{n \times d_{k}}$  is generated from $G^k$ using the smooth graph filter $h(\mathbf{L}^k)$ \cite{kalofolias2016learn}. In particular, each column of $\mathbf{X}^k$ is generated as $\mathbf{X}_{\cdot j}^k = h\left(\mathbf{L}^k\right) \mathbf{x}_0$; where $\mathbf x_0 \sim \mathcal{N}(\zeros, \mathbf{I})$. In this paper, we consider three different graph filters: 1)  Gaussian filter $\left[h\left(\mathbf{L}\right) = \mathbf{L}^\dagger\right]$; 2) Heat filter $\left[h(\mathbf{L}) = \exp\left(-\alpha \mathbf{L}\right)\ \text{with}\ \alpha=5\right]$; and 3) Tikhonov filter $\left[h(\mathbf{L}) = \left(\mathbf I + \alpha\mathbf{L}\right)^{-1}\ \text{with}\ \alpha = 20\right]$. In the case of the Gaussian filter, the resulting signals are Gaussian distributed and the graph Laplacian and the precision matrix are equivalent to each other. We finally add $\eta\%=10\%$ noise (in $\vell_2$ norm sense) to $\mathbf{X}^k$.
\paragraph{Benchmark Models}
We compare the proposed method with respect to single view graph learning (SV) \cite{dong2016learning} that learns the graph topology for each view independently by assuming that the signals are smooth with respect to the graph and co-hub node joint graph learning (CNJGL) \cite{mohan2014node} that jointly learns precision matrices with co-hub nodes assuming a Gaussian Graphical Model (GGM). The performance of all methods is quantified by computing the average F1 score with respect to the ground truth graphs across $50$ realizations.
\paragraph{Results and Discussion}
In the first experiment, we evaluate the performance of  CH-MVGL with respect to the number of views, percentage of co-hub nodes, and noise level using ER model with heat filter. First, we set $n=256$, the co-hub node percentage to $3\%$, and the noise level to $10\%$, while varying the number of views from $2$ to $14$. Next, we set $K=6$, $n=256$, and the noise level to $10\%$, while varying the percentage of co-hub nodes from $3\%$ to $12\%$ in increments of $3\%$. Finally, we set $K=6$, $n=256$, and the co-hub node percentage to $3\%$, while varying the noise level from $10\%$ to $70\%$ in increments of $20\%$. In Fig. \ref{fig:res views}, it  can be seen that the performance of CH-MVGL and CNJGL  improve with increasing number of views as  more views improve learning accuracy, while CH-MVGL achieves the best results. All methods outperform SV because it fails to leverage the similarity across views. From Fig. \ref{fig:res cohub percantges}, when the percentage of co-hub nodes increases, the performance of all methods decreases as $\mathbf{V}$ is no longer sparse, making it difficult to detect the correct co-hub nodes. Finally, the performance of all methods decreases as the noise level is increased. Although all of the methods are robust against noise, SV outperforms CNJGL since  CNJGL is designed to estimate precision matrices and performs worse on non-Gaussian data as shown in  Fig. \ref{fig:res noise level percantges}. CH-MVGL achieves the best performance compared to the other methods. 

In the second experiment, we evaluated the performance of CH-MVGL with respect to three different graph models, i.e., ER, BA and RGG, for three different graph filters, i.e., Gaussian, Heat and Tikhonov.  In this experiment, we set the number of views to $K = 6$, the number of nodes to $n = 128$, and the percentage of co-hub nodes to $3\%$. As shown in Fig. \ref{fig:simulated ER BA RGG}, CH-MVGL outperforms the other methods for all graph and signal models. Among the compared methods, CNJGL performs closest to CH-MVGL when the signal is Gaussian, exhibiting the smallest variance due to its focus on learning precision matrices. However, when the signal is smooth, CNJGL's performance declines significantly compared to CH-MVGL, and its variance increases substantially.  CH-MVGL performs the best for the BA model as the inherent hub structure aligns well with the underlying assumptions.

\begin{figure*}
    \centering
    
    \begin{subfigure}{0.28\linewidth}
        \centering
        \includegraphics[width=\linewidth]{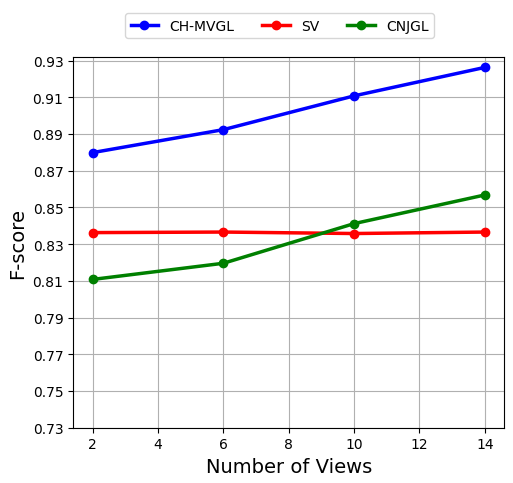}
        \caption{}
        \label{fig:res views}
    \end{subfigure}
    \begin{subfigure}{0.28\linewidth}
        \centering
        \includegraphics[width=\linewidth]{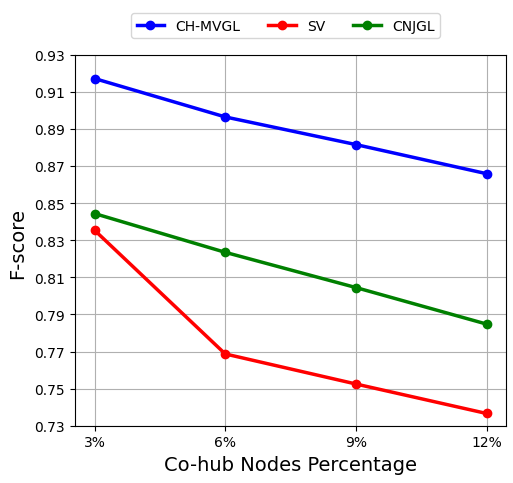}
        \caption{}
        \label{fig:res cohub percantges}
    \end{subfigure}
    \begin{subfigure}{0.28\linewidth}
        \centering
        \includegraphics[width=\linewidth]{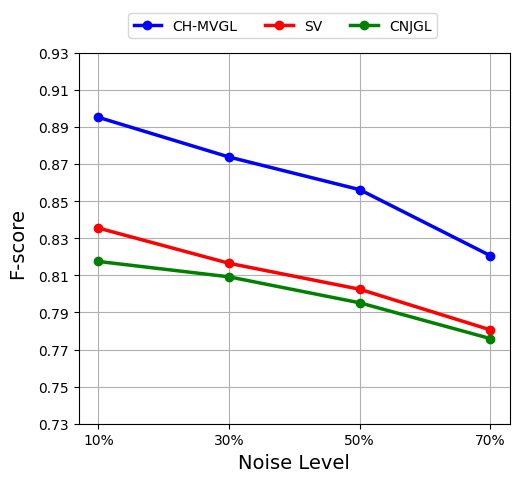}
        \caption{}
        \label{fig:res noise level percantges}
    \end{subfigure}
     
    \caption{Comparison of performance for ER network model with heat graph filter with respect to (a) Number of views, (b) Number of co-hub nodes and (c) Noise level.}
    \label{fig:sca}
    \vspace{-0.4cm}
\end{figure*}
\hspace{1pt}
\begin{figure*}
    \centering
    
    \begin{subfigure}{0.28\linewidth}
        \centering
        \includegraphics[width=\linewidth]{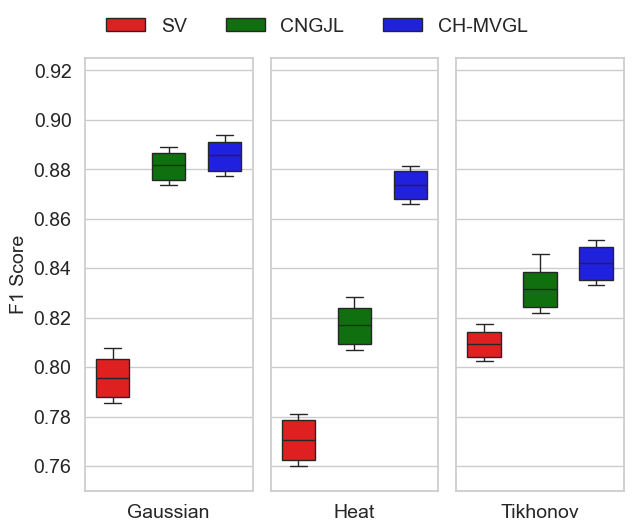}
        \caption{}
        \label{}
    \end{subfigure}
    \begin{subfigure}{0.28\linewidth}
        \centering
        \includegraphics[width=\linewidth]{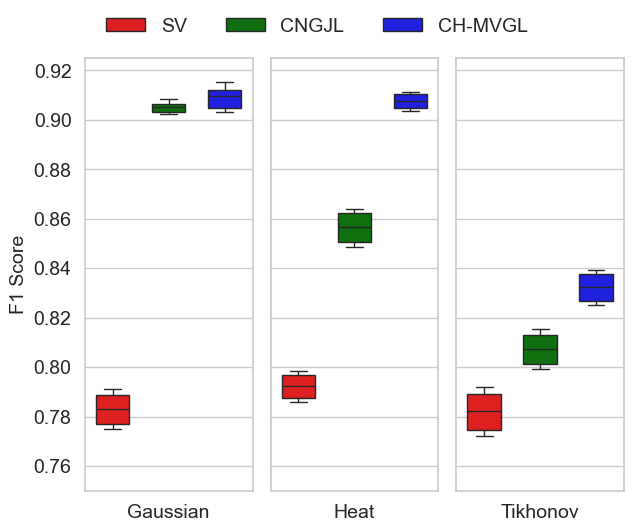}
        \caption{}
        \label{}
    \end{subfigure}
    \begin{subfigure}{0.28\linewidth}
        \centering
        \includegraphics[width=\linewidth]{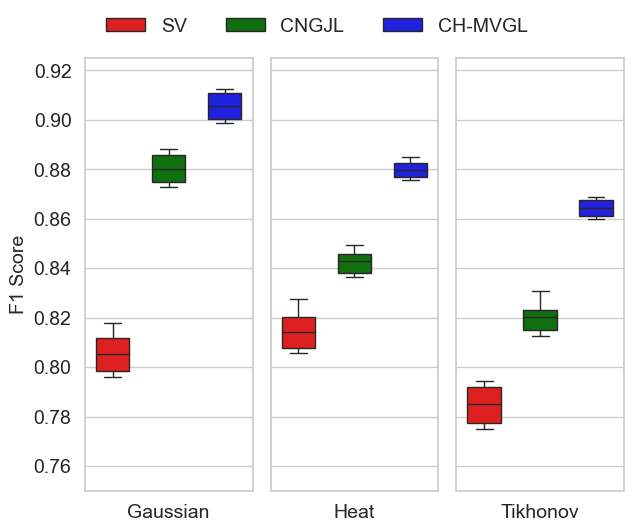}
        \caption{}
        \label{}
    \end{subfigure}
     
   \caption{Comparison of CH-MVGL to SV and CNGJL for
different graph filters for (a) ER, (b) BA, and (c) RGG random network models}
     \label{fig:simulated ER BA RGG}
    \vspace{-0.4cm}
\end{figure*}

\paragraph{Scalability Analysis}
The scalability of the proposed method with respect to the number of nodes and views is evaluated for ER multiview graphs with $h=0.02n$ and $d_{k}=700$ for all views.  In the first experiment, the number of nodes is increased logarithmically from $64$ to $2048$, and the number of views is set to $6$. In the second experiment, the number of views increases from $2$ to $14$, and the number of nodes is set to $256$.  Fig. \ref{fig:sca nodes} and Fig. \ref{fig:sca views} illustrate the run times of the proposed method and the benchmarking techniques for the first and second experiments, respectively. The run time of all methods increases with the number of views and nodes, with the number of nodes affecting the complexity more. SV is the fastest method as it learns each view separately without any regularization. CNJGL is the slowest as it performs SVD in each iteration. The computational complexity of CH-MVGL is $\mathcal{O}(Kn^3)$, so increasing the number of views increases the complexity linearly, while increasing the number of nodes increases it cubically.

\begin{figure*}
    \centering
    
    \begin{subfigure}{0.30\linewidth}
        \centering
        \includegraphics[width=\linewidth]{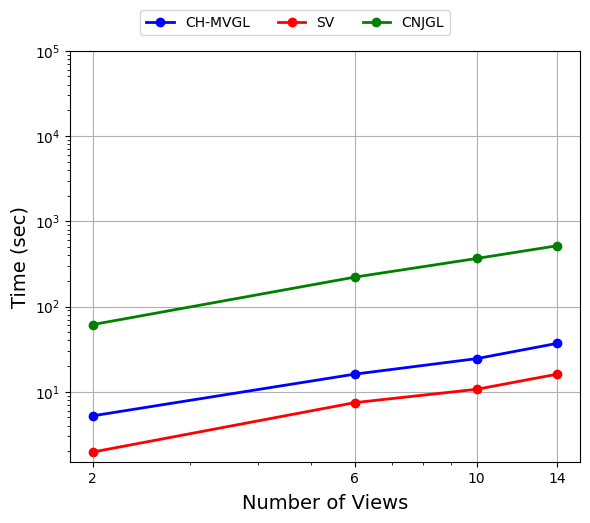}
        \caption{}
        \label{fig:sca nodes}
    \end{subfigure}
    \begin{subfigure}{0.30\linewidth}
        \centering
        \includegraphics[width=\linewidth]{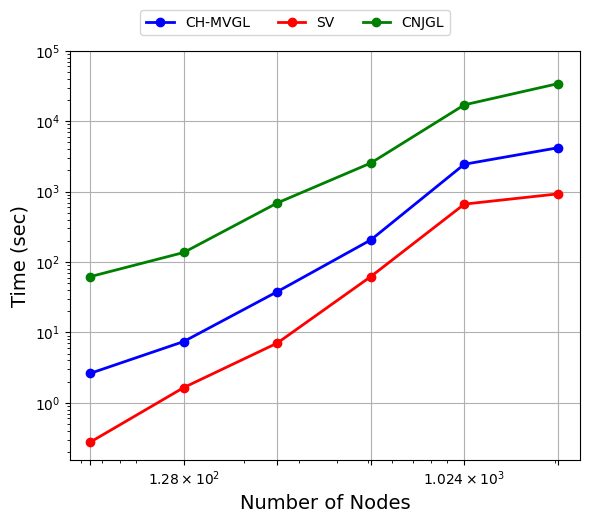}
        \caption{}
        \label{fig:sca views}
    \end{subfigure}
     
\caption{Scalability of CH-MVGL compared to existing methods
with respect to: (a) Number of views; (b) Number of nodes.}
     \label{fig:sca}
     \vspace{-0.4cm}
\end{figure*}
\subsection{fMRI Data Analysis}
Hub node identification is an important problem in neuroimaging as hub nodes correspond to regions relevant for different cognitive processes \cite{xu2022meta}. 
Traditional hub node detection methods rely on functional connectivity networks (FCNs) constructed from fMRI time series using Pearson's correlation. The resulting graphs are usually fully connected and weighted and do not necessarily capture the topological structure of brain networks  \cite{gao2021smooth}. In this paper, we implement CH-MVGL to jointly learn the FCNs from fMRI time series while simultaneously identifying the co-hubs across subjects with the assumption that the FCNs across subjects in a healthy population and for a given task have common hub nodes \cite{ortiz2025learning}. 

CH-MVGL is applied to functional neuroimaging data from $55$ subjects from the Human Connectome Project (HCP)\footnote{Details of data acquisition can be found at \url{db.humanconnectome.org}}. One hour of resting state data was acquired per subject in 15-minute intervals over two separate sessions with eyes open and fixation on a crosshair.  Functional volumes are spatially smoothed with a Gaussian kernel (5 mm full-width at half-maximum). 
% using the Nilearn smooth_img function [https://nilearn.github.io/stable/modules/generated/nilearn.image.smooth_img.html]. 
The first 10 volumes are discarded so that the fMRI signal achieves steady-state magnetization, resulting in $1190$ time points. Voxel fMRI time series are detrended and band-pass filtered [0.01 - 0.15] Hz 
% using Nilearn NiftiLabelsMasker function [https://nilearn.github.io/stable/modules/generated/nilearn.maskers.NiftiLabelsMasker.html] 
to improve the signal-to-noise ratio for typical resting-state fluctuations. Finally, Glasser’s multimodal parcellation \cite{glasser2016multi} resliced to fMRI resolution is used to parcellate fMRI volumes and compute regionally averaged signals which 
% These were z-scored and stored in an $N \times P$ matrix.
 are used as the graph signals. 

The proposed objective function in Eq. \eqref{eq:objFunc} identifies co-hubs as the non-zero columns in the matrix $\mathbf{V}$. The norm of the corresponding column quantifies the strength of each co-hub. To determine the number of co-hubs, each non-zero column is normalized by the maximum norm of the columns of $\mathbf{V}$ and sorted in descending order. Using the resting state data from $55$ subjects from sessions $1$ and $2$ separately, we plotted the normalized norm of the columns of $\mathbf{V}$ as shown in Fig. \ref{fig:cohubs_number}. The curves corresponding to the two scans show a sharp initial drop followed by a flattening trend. 
%This pattern indicates that a few nodes hold significantly higher importance.
Based on this curve, the number of co-hub nodes is determined to be $6$ for both sessions.

%Fig. \ref{fig:cohubs_number} shows the normalized values of all nodes, sorted in descending order, to identify the co-hub nodes. The curve drops sharply at first and then flattens, indicating a few nodes have much higher importance. Based on this curve, we determine the number of co-hub nodes to be equal to $7$. This experiment was conducted using data from $55$ subjects taken from the HCP dataset, specifically from session $1$ and $2$.

The spatial distribution and connectivity of co-hubs is illustrated in Fig. \ref{fig:ccohubs onnectivity} where the top $6$ co-hubs identified across $55$ subjects for sessions $1$ and $2$ are shown. For session $1$, the co-hub nodes are located in the default mode network (DMN). For session $2$, the majority of the hub nodes are located in the default mode networks, while one is in the dorsal attention networks. Moreover,  a couple of the co-hubs have significantly high number of long-range connections across brain regions and hemispheres. These findings are consistent with prior studies on resting-state fMRI \cite{cole2010identifying,tomasi2011association,de2013connectivity,xu2022meta} which showed that the majority of the hub nodes are in the DMN followed by dorsal attention networks. In these studies, DMN has been shown to have the highest global brain connectivity, which may reflect the connections necessary to implement the wide variety of cognitive functions in which this network is involved.

Furthermore, we evaluated the replicability of the detected hubs in sessions $1$ and $2$. For this analysis, we followed a replacement approach by resampling, where $40$ subjects were randomly selected from the total of $55$. The frequency histograms of the selected hub nodes and the corresponding entropy values are computed to determine the consistency of the selected co-hub nodes in each run. 
% Entropy is calculated based on the frequency distribution of how often each node is identified as a co-hub across all resamplings. For each node $i$, the probability of being a hub, $p_i$, is computed as the ratio of the number of times it is identified as a co-hub node to the total number of runs. The entropy is then given by:
% \vspace{-0.2cm}
% \begin{equation}
% \begin{split}
%     H = - \sum_{i=1}^{n} p_i \log(p_i),
%     \label{eq: Entropy}
% \end{split}
% \end{equation}
%\noindent where 
Lower entropy values suggest that specific nodes are consistently selected as co-hubs, indicating higher replicability. Fig. \ref{fig:Replicability} shows the frequency histograms and the entropy for both sessions along with the selected hub nodes across runs. In the first session, $4$ hub nodes are identified in every resampling run, while $2$ additional hubs are identified with slightly lower frequency. This implies a stable set of co-hubs, alongside some variability across resamples. In session $2$, $2$ hub nodes are identified in almost every resampling run, while $2$ additional hubs are identified with slightly lower frequency. Both sessions identified co-hub nodes consistent with previous studies, showing low entropy that indicates a stable pattern of co-hubs. The second session exhibited slightly lower entropy than the first, suggesting a marginally more stable configuration.

\begin{figure*}
    \centering
    
    \begin{subfigure}{0.30\linewidth}
        \centering
        \includegraphics[width=\linewidth]{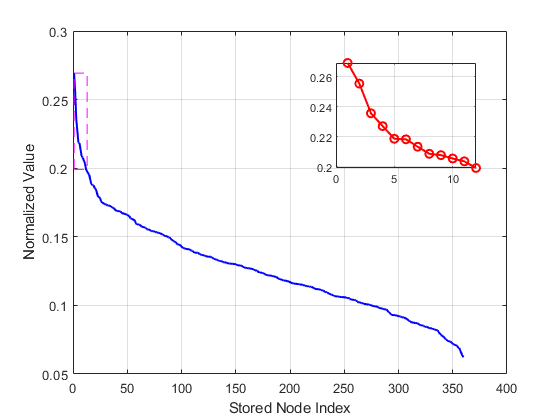}
        \caption{}
        \label{}
    \end{subfigure}
    \begin{subfigure}{0.30\linewidth}
        \centering
        \includegraphics[width=\linewidth]{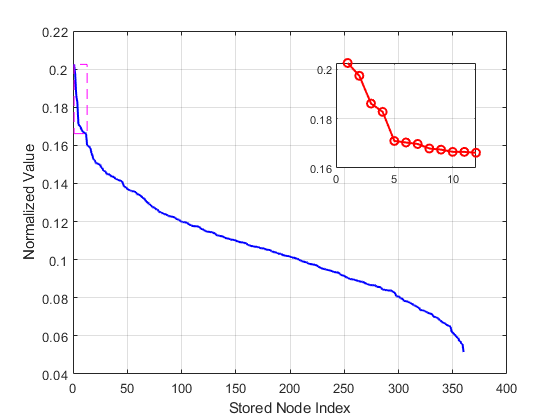}
        \caption{}
        \label{}
    \end{subfigure}
        \caption{Normalized values of nodes used to determine co-hub nodes for (a) session $1$, and (b) session $2$.}
    \label{fig:cohubs_number}
 \end{figure*}    

\begin{figure}
    \centering
        \includegraphics[width=0.45\linewidth]{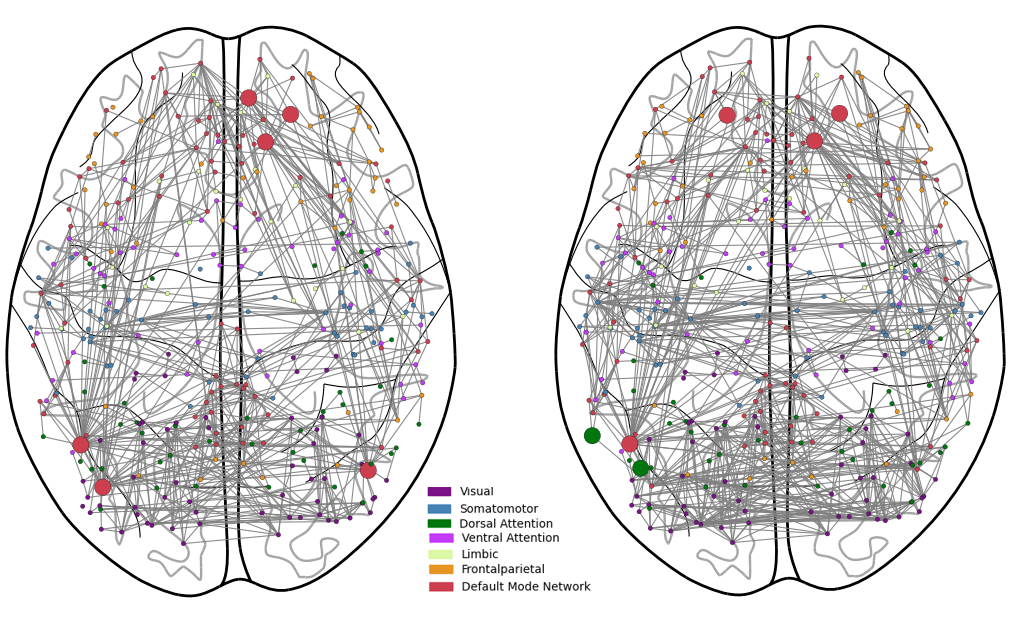}
    
    \caption{Connectivity of co-hub nodes identified by the proposed CH-MVGL model on the HCP dataset for sessions $1$ and $2$. The left panel corresponds to session $1$, and the right panel to session $2$. The larger nodes represent the co-hubs.}
    \label{fig:ccohubs onnectivity}
\end{figure}    

\begin{figure}
    \centering
        \includegraphics[width=0.45\linewidth]{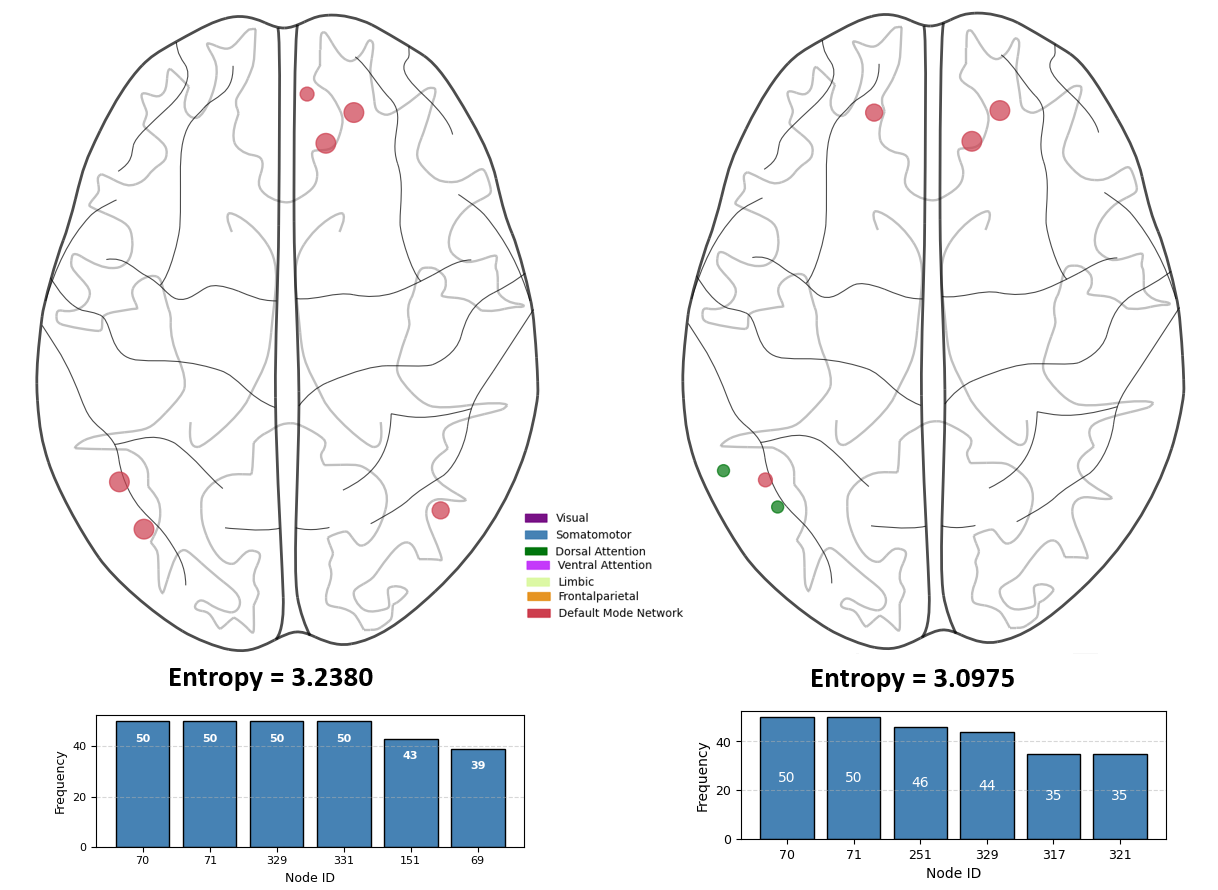}
    
    \caption{The replicability of the proposed CH-MVGL model on the HCP dataset across sessions $1$ and $2$. The left panel corresponds to session $1$, and the right panel to session $2$. The size of each node indicates how frequently it was identified as a co-hub node.}
    \label{fig:Replicability}
\end{figure}
\section{Conclusions}
In this paper, we introduce a multiview graph learning approach based on graph signal processing, in which different graphs share a common set of hub nodes. The proposed model learns graph Laplacians with the assumption that the observed signals are smooth with respect to the underlying graphs, while the different views share a common structure through their hubs.  This model presents a novel approach to joint graph learning, diverging from traditional methods that typically focus on edge-based similarity rather than node-based similarity. The uniqueness of the proposed model is proven, showing that all view-specific edges outside the co-hub edge set are identifiable. We also provide an estimation error bound between the estimated and true Laplacian matrices, showing that while the error decreases as the number of samples approaches infinity, a nonzero bias remains due to the number of co-hub nodes.
The proposed model is applied to simulated data generated from various random graph models and graph filters, yielding distinct signal models. The proposed method is also used to analyze resting-state fMRI data across multiple subjects. The proposed method primarily detects hub nodes in the default mode network, consistent with prior studies on resting-state networks. While the current model assumes that the hub nodes are common across all views, it can be easily modified to accommodate common and private hub nodes across views, similar to \cite{kim2019bayesian,huang2024njgcg}. Future work will also consider extensions to dynamic graphs with temporal smoothness and the incorporation of other graph structures, such as communities, in multiview graph learning.

\section{Appendix}
\subsection{Derivation of the update steps}
\label{sec:Appendix_derivations}

The solution to Eq. \eqref{eq:LagFunc} can be obtained by breaking it down into subproblems and solving each one individually.

    \noindent \textbf{$\mathbf{E}^k$ update}: $\mathbf{E}^k$ subproblem can be written as follows:
    \begin{equation}
        \begin{split}
            &\min_{\substack{\mathbf E^{k}_l}}\trace(\mathbf{B}^k \mathbf{E}^k_l) + \frac{\alpha}{2}\left\|\mathbf{\Psi}_l^k-\mathbf{P}\mathbf{E}^k_{l}{\mathbf{P}}^\top+\mathbf{Z}_l^k+\frac{\mathbf{R}_l^k}{\alpha}\right\|_F^2 +\frac{\alpha}{2}\left\Vert\mathbf{C}^k_l-\mathbf P \mathbf{E}^k_l \mathbf P^\top+\frac{\mathbf{Y}^k_l}{\alpha}\right\Vert_F^2.
        \label{eq: Ek sub}
        \end{split}
    \end{equation}
The solution of Eq. \eqref{eq: Ek sub} can be found by taking the gradient and setting it to zero, which yields:
\begin{equation}
    \begin{split}
        \mathbf{E}^k_{l+1}=\frac{\alpha\mathbf{P}^\top\mathbf{C}^k_l\mathbf{P}+\alpha\mathbf{P}^\top\mathbf{\Psi}^k_l\mathbf{P}+\alpha\mathbf{P}^\top\mathbf{Z}^k_l\mathbf{P}+\mathbf{\theta}^k_l}{2\alpha},
        \label{eq: Ek sol}
    \end{split}
\end{equation}
\noindent where $\mathbf{\theta}^k_l=\mathbf{P}^\top\mathbf{R}^k_l\mathbf{P}+\mathbf{P}^\top\mathbf{Y}^k_l\mathbf{P}-{\mathbf{B}^k}^\top$.
\\
\noindent \textbf{$\mathbf{\Xi}^k$ update}: $\mathbf{\Xi}^k$ subproblem can be written as follows:
\begin{equation}
    \begin{split}
    &\min_{\substack{\mathbf \Xi_l^{k}}} \gamma_4\left\|\mathbf P \mathbf{\Xi}^k_l \mathbf P^\top\right\|_F^2 +\frac{\alpha}{2}\left\Vert\mathbf{\Gamma}^k_l-\mathbf P \mathbf{\Xi}^k_l \mathbf P^\top+\frac{\mathbf{T}^k_l}{\alpha}\right\|_F^2.
        \label{eq: Xi sub}
    \end{split}
\end{equation}
Similar to Eq. \eqref{eq: Ek sub}, the solution of Eq. \eqref{eq: Xi sub} can be found as follows:
\begin{equation}
    \begin{split}
        \mathbf{\Xi}^k_{l+1}=\frac{\alpha\mathbf{P}^\top\mathbf{\Gamma}^k_l\mathbf{P}+\mathbf{P}^\top\mathbf{T}^k_l\mathbf{P}}{2\gamma_4+\alpha}.
        \label{eq: Xi sol}
    \end{split}
\end{equation}

\noindent \textbf{$\mathbf{V}$ update}: $\mathbf{V}$ subproblem can be written as follows:
\begin{equation}
{\begin{split}
    &\min_{\substack{\mathbf{V}_l}}  \frac{\alpha}{2}\sum_{k=1}^{K}\left\{\left\Vert\mathbf{C}^k_{l}-\mathbf{\Gamma}^k_{l}-\mathbf{V}_{l}-\mathbf{W}_{l}+\frac{\mathbf{M}^k_l}{\alpha}\right\Vert_F^2\right\}+\frac{\alpha}{2}\left\Vert\mathbf{V}_l-\mathbf{W}^\top_{l}+\frac{\mathbf{N}_l}{\alpha}\right\Vert_F^2+\frac{\alpha}{2}\left\Vert\mathbf{G}_{l}-\mathbf{V}_l+\frac{\mathbf{Q}_l}{\alpha}\right\Vert_F^2.
\end{split}}
\label{eq: V sub}
\end{equation} 
The solution of Eq. \eqref{eq: V sub} can be written as follows:

\begin{equation}
    \begin{split}
        \mathbf{V}_{l+1} = \frac{\displaystyle\sum_{k=1}^K\left[\alpha\mathbf{C}^k_{l}-\alpha\mathbf{\Gamma}^k_{l}-\alpha\mathbf{W}_{l}+\mathbf{M}^k_{l}\right]+\mathbf{\Theta}_l}{\alpha\left(K+2\right)},
    \label{eq: V sol}
    \end{split}
\end{equation}
\noindent where $\mathbf{\Theta}_l=\alpha\mathbf{W}^\top_{l}-\mathbf{N}_l+\alpha\mathbf{G}_{l}+\mathbf{Q}_l.$\\

\noindent 
\textbf{$\mathbf{Z}^k$ update}: $\mathbf{Z}^k$ subproblem can be formulated as follows:
\begin{equation}
    \begin{split}
    &\min_{\substack{\mathbf{Z}_l^k}} - \gamma_2 \trace(\log(\mathbf{Z}^k_l)) +\frac{\alpha}{2}\left\|\mathbf{Z}^k_l-\mathbf{I}\odot\mathbf{C}^k_l+\frac{\mathbf{J}^k_l}{\alpha}\right\|_F^2 +\frac{\alpha}{2}\left\|\mathbf{\Psi}_l^k-\mathbf{P}\mathbf{E}_{l+1}^k{\mathbf{P}}^\top+\mathbf{Z}_l^k+\frac{\mathbf{R}_l^k}{\alpha}\right\|_F^2.
    \label{eq: Zk sub}
    \end{split}
\end{equation}

The solution of Eq. \eqref{eq: Zk sub} can be written as follows:

\begin{equation}
    \begin{split}
        \mathbf{Z}^k_{l+1}=\left[\frac{\mathbf{U}^k_l+\sqrt{\left(\mathbf{U}^k_l\right)^2+8\alpha\gamma_2\mathbf{I}}}{4\alpha}\right]\odot \mathbf{I},    
    \label{eq: Zk sol}
    \end{split}
\end{equation}
\noindent where $\mathbf{U}^k_l=\alpha\mathbf{I}\odot\mathbf{C}^k_l-\mathbf{J}^k_l-\alpha\mathbf{\Psi}_l^k+\alpha\mathbf{P}\mathbf{E}_{l+1}^k{\mathbf{P}}^\top-\mathbf{R}_l^k$. The power and division operations in Eq. \eqref{eq: Zk sol} are element-wise. \\
\noindent \textbf{$\mathbf{\Gamma}^k$ update}: $\mathbf{\Gamma}^k$ subproblem can be formulated as follows:

\begin{equation}
{\begin{split}
    &\min_{\substack{\mathbf{\Gamma}_l^k}} \frac{\alpha}{2}\left\Vert\mathbf{C}^k_{l}-\mathbf{\Gamma}^k_{l}-\mathbf{V}_{l+1}-\mathbf{W}_{l}+\frac{\mathbf{M}^k_l}{\alpha}\right\Vert_F^2+\ \frac{\alpha}{2}\left\Vert\mathbf{\Gamma}^k_l-\mathbf P \mathbf{\Xi}^k_{l+1} \mathbf P^\top+\frac{\mathbf{T}^k_l}{\alpha}\right\Vert_F^2. 
\end{split}}
\label{eq: Gamma sub}
\end{equation} 

The solution of Eq. \eqref{eq: Gamma sub} can be written as follows:

\begin{equation}
    \begin{split}
\mathbf{\Gamma}^k_{l+1}=\frac{\alpha\left(\mathbf{C}^k_{l}-\mathbf{V}_{l+1}-\mathbf{W}_{l}+\mathbf{P}\mathbf{\Xi}^k_{l+1}\mathbf{P}^\top\right)+\mathbf{M}^k_l-\mathbf{T}^k_l}{2\alpha}.
    \label{eq: Gamma sol}
    \end{split}
\end{equation}
\\

\noindent \textbf{$\mathbf{C}^k$ update}: $\mathbf{C}^k$ subproblem can be formulated as follows:

\begin{equation}
    \begin{split}
        &\min_{\substack{\mathbf{C}_l^k}}  \frac{\alpha}{2}\left\Vert\mathbf{C}^k_l-\mathbf P \mathbf{E}^k_{l+1} \mathbf P^\top+\frac{\mathbf{Y}^k_l}{\alpha}\right\Vert_F^2 +\frac{\alpha}{2}\left\Vert\mathbf{C}^k_l-\mathbf{\Gamma}^k_{l+1}-\mathbf{V}_{l+1}-\mathbf{W}_{l}+\frac{\mathbf{M}^k_l}{\alpha}\right\Vert_F^2+\frac{\alpha}{2}\left\Vert\mathbf{Z}^k_{l+1}-\mathbf{I}\odot\mathbf{C}^k_l+\frac{\mathbf{J}^k_l}{\alpha}\right\Vert_F^2 .
        \label{eq: Ck sub}
    \end{split}
\end{equation}

The solution of Eq. \eqref{eq: Ck sub} can be found as follows:

\begin{equation}
    \begin{split}
        \mathbf{C}^k_{l+1}=\frac{\alpha\mathbf{P}\mathbf{E}^k_{l+1}\mathbf{P}^\top-\mathbf{Y}^k_l+\alpha\mathbf{Z}^k_{l+1}+\mathbf{J}^k_l+\bar{\mathbf{\theta}}^k_l}{3\alpha},
    \label{eq: Ck sol}
    \end{split}
\end{equation}
\noindent where $\bar{\mathbf{\theta}}^k_l=\alpha\mathbf{\Gamma}^k_{l+1}+\alpha\mathbf{V}_l+\alpha\mathbf{W}_{l+1}-\mathbf{M}^k_l$.\\

\noindent \textbf{$\mathbf{\Psi}^k$ update}: $\mathbf{\Psi}^k$ subproblem can be formulated as follows:
\begin{equation}
    \begin{split}
&\min_{\substack{\mathbf \Psi_l^{k}}}  \gamma_1 \left\| \mathbf{\Psi}_l^k\right\|_F^2 +\frac{\alpha}{2}\left\Vert\mathbf{\Psi}_l^k-\mathbf{P}\mathbf{E}_{l+1}^k{\mathbf{P}}^\top+\mathbf{Z}_{l+1}^k+\frac{\mathbf{R}_l^k}{\alpha}\right\Vert_F^2, 
\end{split}
\label{eq:LagFunc}
\end{equation} 
The solution of the above problem is given by:

\begin{equation}
    \begin{split}
        \mathbf{\Psi}^k_{l+1} = \frac{\alpha\mathbf{P}\mathbf{E}_{l+1}^k{\mathbf{P}}^\top-\alpha\mathbf{Z}_{l+1}^k-\mathbf{R}_l^k}{2\gamma_1+\alpha}.
        \label{eq: Psi sol}
    \end{split}
\end{equation}

\noindent \textbf{$\mathbf{G}$ update}: $\mathbf{G}$ subproblem can be formulated as follows:
\begin{equation}
{\begin{split}
    &\min_{\substack{\mathbf{G}_l}} \gamma_3\Vert \mathbf{G}_l\Vert_{2,1}+\frac{\alpha}{2}\left\Vert\mathbf{G}_l-\mathbf{V}_{l+1}+\frac{\mathbf{Q}_l}{\alpha}\right\Vert_F^2.
\end{split}}
\label{eq: G  sub}
\end{equation} 
The solution of Eq. \eqref{eq: G sub} can be found by using the proximal algorithm for $\ell_{2,1}$-norm.
\begin{equation}
    \begin{split}
\mathbf{G}_{l+1}=\mathcal{T}_{{2,1}_{\frac{\gamma_3}{2\alpha}}} \left(\mathbf{V}_{l+1}-\frac{\mathbf{Q}_l}{\alpha}\right),
        \label{eq: G sol} 
    \end{split}
\end{equation}
\noindent where $\mathcal{T}_{2,1}$ is the proximal operator for $\ell_{2,1}$-norm. \\
\noindent \textbf{$\mathbf{W}$ update}: $\mathbf{W}$ subproblem can be written as follows:
\begin{equation}
{\begin{split}
    &\min_{\substack{\mathbf{W}_l}} \sum^K_{k=1} \left\{\frac{\alpha}{2}\left\Vert\mathbf{C}^k_{l+1}-\mathbf{\Gamma}^k_{l+1}-\mathbf{V}_{l+1}-\mathbf{W}_{l}+\frac{\mathbf{M}^k_{l}}{\alpha}\right\Vert_F^2\right\} +\frac{\alpha}{2}\left\Vert\mathbf{V}_{l+1}-\mathbf{W}^\top_l+\frac{\mathbf{N}_l}{\alpha}\right\Vert_F^2. 
\end{split}}
\label{eq: W sub}
\end{equation} 
The solution of Eq. \eqref{eq: W sub} can be written as follows:

\begin{equation}
    \begin{split}
        \mathbf{W}_{l+1}=\frac{\displaystyle\sum_{k=1}^K\left[\alpha\mathbf{C}^k_{l+1}-\alpha\mathbf{\Gamma}^k_{l+1}-\alpha\mathbf{V}_{l+1}+\mathbf{M}^k_l\right]+\mathbf{\Phi}_l}{\alpha\left(K+1\right)},
    \label{eq: W sol}
    \end{split}
\end{equation}
\noindent where $\mathbf{\Phi}_l=\alpha\mathbf{V}^\top_{l+1}+\mathbf{N}^\top_l$.\\

\noindent \textbf{Lagrangian multipliers and penalty parameter}: The Lagrangian multipliers and penalty parameter can be updated as follows:
\begin{equation}
    \begin{split}
    &\mathbf{M}^k_{l+1}=\mathbf{M}^k_l+\alpha_l\left(\mathbf{C}^k_{l+1}-\mathbf{\Gamma}^k_{l+1}-\mathbf{V}_{l+1}-\mathbf{W}_{l+1}\right),\\&
    \mathbf{Y}^k_{l+1}=\mathbf{Y}^k_l+\alpha_l\left(\mathbf{C}^k_{l+1}-\mathbf P \mathbf{E}^k_{l+1} \mathbf P^\top\right), \\&
    \mathbf{T}^k_{l+1}=\mathbf{T}^k_l+\alpha_l\left(\mathbf{\Gamma}^k_{l+1}-\mathbf P \mathbf{\Xi}^k_{l+1} \mathbf P^\top\right), \\&
    \mathbf{J}^k_{l+1}=\mathbf{J}^k_l+\alpha_l\left(\mathbf{Z}^k_{l+1}-\mathbf{I}\odot\mathbf{C}^k_{l+1}\right),\\&
     \mathbf{R}_{l+1}^k=\mathbf{R}^k_l+\alpha_l\left(\mathbf{\Psi}^k_{l+1}-\mathbf{P}\mathbf{E}^k_{l+1}\mathbf{P}^\top+\mathbf{Z}^k_{l+1}\right),\\&
     \mathbf{N}_{l+1}=\mathbf{N}_l+\alpha_l\left(\mathbf{V}_{l+1}-\mathbf{W}^\top_{l+1}\right),\\&
     \mathbf{Q}_{l+1}=\mathbf{Q}_l+\alpha_l\left(\mathbf{G}_{l+1}-\mathbf{V}_{l+1}\right), \\&
     \alpha_{l+1}=\mu\alpha_l, \hspace{1em}\mu>1.
   \label{eq: Lag multiplayer and penalty paremeter}
    \end{split}
\end{equation}

\subsection{Identifiability of Co-hub Decomposition}
\label{sec: Identifiability Proof}
\begin{proof}
Let us define $\Delta \mathbf{S}^{k}= \mathbf{S}_{(1)}^{k}-\mathbf{S}_{(2)}^{k}$ and  $\Delta \mathbf{H}= \mathbf{H}_{(1)}-\mathbf{H}_{(2)}$ for $k\in\{1,2,\ldots,K\}$. Since $\mathbf{L}^k=\mathbf{S}_{(1)}^k+\mathbf{H}_{(1)}=\mathbf{S}_{(2)}^k+\mathbf{H}_{(2)},$ 
\begin{align}
\label{iden_eq-1}
   \Delta \mathbf{S}^{k} + \Delta \mathbf{H} = \mathbf{0},\quad \forall k=1,2,\ldots,K . 
\end{align}
If $\Delta \mathbf{H}_{u v} \neq 0$, then either $\left[\mathbf{H}_{(1)}\right]_{ u v}$ or $\left[\mathbf{H}_{(2)}\right]_{ u v}$ must be nonzero. Therefore, for any $(u,v) \in \operatorname{supp}\left(\Delta \mathbf{H}\right)$,
$$
(u,v) \in \operatorname{supp}\left(\mathbf{H}_{(1)}\right)\ \cup\ \operatorname{supp}\left(\mathbf{H}_{(2)}\right).
$$
Equivalently,  
$$
\operatorname{supp}\left(\Delta \mathbf{H}\right) \subseteq \operatorname{supp}\left(\mathbf{H}_{(1)}\right)\ \cup\ \operatorname{supp}\left(\mathbf{H}_{(2)}\right)
$$
From the assumption, we know that $\operatorname{supp}\left(\mathbf{H}_{(i)}\right) \subseteq \mathbb{H}$ for each $i=1,2$ and this will imply $\operatorname{supp}\left(\Delta \mathbf{H}\right) \subseteq \mathbb{H}$. Hence, for any $(u,v) \notin \mathbb{H}$, $ \Delta \mathbf{H}_{uv} =0$. Then from equation (\ref{iden_eq-1}), we can conclude that
$$
\Delta \mathbf{S}^{k}_{uv}=0 \quad \Longrightarrow \quad \left[\mathbf{S}^{k}_{(1)}\right]_{uv}= \left[\mathbf{S}^{k}_{(2)}\right]_{uv},\quad  (u,v) \notin \mathbb{H}.
$$
Now for $(u,v)\in \mathbb{H}$, we have $\Delta \mathbf{S}_{u v}^k=-\Delta \mathbf{H}_{u v}$ from (\ref{iden_eq-1}) for $k\in\{1,2,\ldots,K\}$. Now let us define 
$$
\mathbf{A}_{u v}:=\Delta \mathbf{S}_{u v}^1=\Delta \mathbf{S}_{u v}^2=\ldots = \Delta \mathbf{S}_{u v}^K, \quad(u, v) \in \mathbb{H}.
$$
Then for each $k\in\{1,2,\ldots,K\}$,
\begin{align*}
    \Delta \mathbf{S}^k=\mathbf{A}, \quad \Delta \mathbf{H}=-\mathbf{A}.
\end{align*}
Thus, we can conclude that
$$
\mathbf{S}_{(1)}^k=\mathbf{S}_{(2)}^k+\mathbf{A}, \quad \mathbf{H}_{(1)}=\mathbf{H}_{(2)}-\mathbf{A}.
$$
\end{proof}

\subsection{Proof of the Convergence of the Algorithm}
\label{convergence_proof}
In order to write the optimization problem as a four-block ADMM algorithm, we first define the following matrices:
\begin{itemize}
    \item $\mathbf{R}_K=\mathbf{1}_K \otimes \mathbf{I}_{n^2}$ where $\otimes$ is the Kronecker product. 
    \item$\mathbf{J}_n =\sum_{i=1}^n\left(\mathbf{e}_i \otimes \mathbf{e}_i\right)\left(\mathbf{e}_i \otimes \mathbf{e}_i\right)^{\top} \in \mathbb{R}^{n^2 \times n^2}$, where $\mathbf{e}_i$ is the standard basis vector with the $i$-th element equal to 1 and all other entries equal to zeros. $\widetilde{\mathbf{J}}= \operatorname{bldiag}\left(\mathbf{J}_n,\ldots,\mathbf{J}_n \right).$    
    \item $\mathbf{C}_n =\sum_{i=1}^n \sum_{j=1}^n\left(\mathbf{e}_i \otimes \mathbf{e}_j\right)\left(\mathbf{e}_j \otimes \mathbf{e}_i\right)^{\top}\in \mathbb{R}^{n^2 \times n^2}$.
     \item $\widetilde{\mathbf{P}}:=\operatorname{bldiag}(\underbrace{\mathbf{P}\otimes \mathbf{P}, \ldots, \mathbf{P}\otimes \mathbf{P}}_{K \text { times }}) \in \mathbb{R}^{K n^2 \times K(n-1)^2}$.
    \end{itemize}
We write the optimization problem as a four-block ADMM algorithm as follows,
\begin{equation}
\label{optimization_main_eq}
\min _{\mathbf{x}_1, \mathbf{x}_2, \mathbf{x}_3, \mathbf{x}_4}\  \sum_{i=1}^{4}f_{i}(\mathbf{x}_{i})\quad \text { s.t. } \quad \sum_{i=1}^{4}\mathbf{A}_i \mathbf{x}_i= \mathbf{0},\end{equation}
where the variables  $\mathbf{x}_1, \mathbf{x}_2, \mathbf{x}_3, \mathbf{x}_4$ are defined as  $\mathbf{x}_1=\left[\left\{\operatorname{vec}\left(\mathbf{E}^k\right)\right\}_{k=1}^{K} ,\left\{\operatorname{vec}\left(\mathbf{\Xi}^k\right)\right\}_{k=1}^{K}\operatorname{vec}(\mathbf{V})  \right],\\ \mathbf{x}_2=\left[\left\{\operatorname{vec}\left(\mathbf{Z}^k\right)\right\}_{k=1}^{K} ,\left\{\operatorname{vec}\left(\mathbf{\Gamma}^k\right)\right\}_{k=1}^{K}\right], \mathbf{x}_3= \left[\left\{\operatorname{vec}\left(\mathbf{C}^k\right)\right\}_{k=1}^{K},\left\{\operatorname{vec}\left(\mathbf{\Psi}^k\right)\right\}_{k=1}^{K} \right]$, and $\mathbf{x}_4=\left[\operatorname{vec}(\mathbf{W}),\operatorname{vec}(\mathbf{G})  \right]$ and the corresponding coefficient matrices are as follows, 
$$
\begin{gathered}
\mathbf{A}_1= 
\left[\begin{array}{ccc}
-\widetilde{\mathbf{P}} & \mathbf{0} & \mathbf{0} \\
\mathbf{0} & \mathbf{0} & \mathbf{0} \\
\mathbf{0} & -\widetilde{\mathbf{P}} & \mathbf{0} \\
\mathbf{0} & \mathbf{0} & \mathbf{I}_{n^2} \\
\mathbf{0} & \mathbf{0} & -\mathbf{R}_K \\
\mathbf{0} & \mathbf{0} & -\mathbf{I}_{n^2} \\
-\widetilde{\mathbf{P}} & \mathbf{0} & \mathbf{0}
\end{array}\right]
 ,\quad 
 \mathbf{A}_2=\left[\begin{array}{cc}
\mathbf{0} & \mathbf{0} \\
\mathbf{I}_{K n^2} & \mathbf{0} \\
\mathbf{0} & \mathbf{I}_{K n^2} \\
\mathbf{0} & \mathbf{0} \\
\mathbf{0} & -\mathbf{I}_{K n^2} \\
\mathbf{0} & \mathbf{0} \\
\mathbf{I}_{K n^2} & \mathbf{0}
\end{array}\right] \\
\mathbf{A}_3=\left[\begin{array}{cc}
\mathbf{I}_{K n^2} & \mathbf{0} \\
-\tilde{\mathbf{J}} & \mathbf{0} \\
\mathbf{0} & \mathbf{0} \\
\mathbf{0} & \mathbf{0} \\
\mathbf{I}_{K n^2} & \mathbf{0} \\
\mathbf{0} & \mathbf{0} \\
\mathbf{0} & \mathbf{I}_{K n^2}
\end{array}\right],\quad \mathbf{A}_4= \left[\begin{array}{cc}
\mathbf{0} & \mathbf{0} \\
\mathbf{0} & \mathbf{0} \\
\mathbf{0} & \mathbf{0} \\
-\mathbf{C}_n & \mathbf{0} \\
-\mathbf{R}_K & \mathbf{0} \\
\mathbf{0} & \mathbf{I}_{n^2} \\
\mathbf{0} & \mathbf{0}
\end{array}\right] .
\end{gathered}
$$
The functions $f_1,f_2,f_3,f_4$ are defined as
\begin{align*}
f_1\left(\left\{\mathbf{E}^{k},\mathbf{\Xi}^{k}\right\}, \mathbf{V}\right)=\sum_{k=1}^{K}&\operatorname{tr}\left(\mathbf{B}^{k} \mathbf{E}^{k}\right)+\gamma_4\left\|\mathbf{P} \mathbf{\Xi}^{k} \mathbf{P}^{\top}\right\|_F^2, \quad 
f_2\left(\left\{\mathbf{Z}^{k}, \mathbf{\Gamma}^{k}\right\}\right)=-\sum_{k=1}^{K}\gamma_2 \operatorname{tr}\left(\operatorname{log}\left( \mathbf{Z}^{k}\right)\right), \\
& f_3\left(\left\{\mathbf{C}^{k},\mathbf{\Psi}^{k}\right\}\right)= \sum_{k=1}^K \gamma_1\left\|\boldsymbol{\Psi}^k\right\|_F^2,\quad f_4(\mathbf{W},\mathbf{G})= \gamma_3\|\mathbf{G}\|_{2,1}.
\end{align*}
Now, we can rewrite an equivalent optimization problem of \eqref{optimization_main_eq} as follows,
\begin{equation}
\min _{\mathbf{y}_1, \mathbf{y}_2, \mathbf{y}_3, \mathbf{y}_4}\  \sum_{i=1}^{4}g_{i}(\mathbf{y}_{i})\quad \text { s.t. } \quad \sum_{i=1}^{4}\mathbf{B}_i \mathbf{y}_i= \mathbf{0},\end{equation}
where $\mathbf{y}_1=\mathbf{x}_4$, $\mathbf{y}_2=\mathbf{x}_2$, $\mathbf{y}_3=\mathbf{x}_3$ and $\mathbf{y}_4=\mathbf{x}_1$; $g_1=f_4$, $g_2=f_2$, $g_3=f_3$ and $g_4=f_1$ and $\mathbf{B}_1=\mathbf{A}_4$, $\mathbf{B}_2=\mathbf{A}_2$, $\mathbf{B}_3=\mathbf{A}_3$ and $\mathbf{B}_4=\mathbf{A}_1$. 
The functions $g_1,g_2,g_3,g_4$ are proper, closed, and convex.  The linear equalities for the constraints also provide us with feasibility. Hence, a saddle point $\left(\mathbf{y}_1^*, \mathbf{y}_2^*, \mathbf{y}_3^*, \mathbf{y}_4^*, \boldsymbol{\lambda}^*\right)$ exists. Moreover, the functions $g_2,g_3,g_4$ are strongly convex with the strong convexity parameters $\sigma_2,\sigma_3,\sigma_4$ such that $$
\sigma_2 \geq  \gamma_2 / M^2, \quad \sigma_3 \geq 2 \gamma_1, \quad \sigma_4 \geq 2\gamma_4 .
$$
Therefore, the assumptions $(2.1)$ and $(2.2)$ from \cite{lin2015sublinearconvergenceratemultiblock}
are satisfied. Thus, using theorem $(3.3)$ from \cite{lin2015sublinearconvergenceratemultiblock}, the following are the upper bounds on $\alpha$,$$
\alpha \leq \min_{i=2,3} \frac{2 \sigma_i}{i(2 N-i) \lambda_{\max }\left(\mathbf{B}_i^{\top} \mathbf{B}_i\right)},\ \text{and}
$$
$$
\alpha \leq \frac{2 \sigma_4}{(N-2)(N+1) \lambda_{\max }\left(\mathbf{B}_4^{\top} \mathbf{B}_4\right)},
$$
where $\lambda_{\max}\left(\mathbf{B}\right)$ denotes the largest eigenvalue of the matrix $\mathbf{B}$ and $N$ is the total number of blocks. The largest eigenvalues for $\mathbf{B}_i^{\top} \mathbf{B}_i$ with $i=2,3,4$ are
$$
\lambda_{\max }\left(\mathbf{B}_2^{\top} \mathbf{B}_2\right)=2, \lambda_{\max }\left(\mathbf{B}_3^{\top} \mathbf{B}_3\right)=3, \lambda_{\max }\left(\mathbf{B}_4^{\top} \mathbf{B}_4\right)= K+2. 
$$
Plugging in $N=4$, we have the following upper bound on $\alpha$,
\begin{equation}
\label{alpha-eq}
\alpha \leq \min \left\{\frac{\sigma_2}{6}, \frac{\sigma_3}{15}, \frac{\sigma_4}{5(K+2)}\right\} .
\end{equation}
We then have the following bound,
 \begin{align}
 &\sum_{i=1}^4\left(g_i\left(\bar{\mathbf{y}}_i^t\right)-g_i\left(\mathbf{y}_i^*\right)\right)+\rho\left\|\mathbf{B}_1 \bar{\mathbf{y}}_1^t+\mathbf{B}_2 \bar{\mathbf{y}}_2^t+\mathbf{B}_3 \bar{\mathbf{y}}_3^t+\mathbf{B}_4 \bar{\mathbf{y}}_4^t\right\|_{F}\\
 & \leq \frac{\alpha}{2(t+1)} \sum_{i=1}^3\left\|\sum_{m=i+1}^4 \mathbf{B}_m\left(\mathbf{y}_m^0-\mathbf{y}_m^*\right)\right\|_{F}^2+\frac{\rho^2+\left\|\boldsymbol{\lambda}^0\right\|_{2}^2}{\alpha(t+1)},\end{align}
where  $\mathbf{y}_m^0$ and  $\boldsymbol{\lambda}^0$ are the initial values of $\mathbf{y}_m$ and $\boldsymbol{\lambda}$ respectively. For $\rho= \left\|\boldsymbol{\lambda}^*\right\|_2+1$ and $\alpha$ satisfying $(\ref{alpha-eq})$, we finally have \begin{align*}
        & \,\,\,\,\,\,\,\,\ \left|g\left(\overline{\mathbf{y}}^t\right)-g\left(\mathbf{y}^*\right)\right|=\mathcal{O}(1 / t),\\
          &\left\|\mathbf{B}_1 \overline{\mathbf{y}}_1^t+\mathbf{B}_2 \overline{\mathbf{y}}_2^t+\mathbf{B}_3 \overline{\mathbf{y}}_3^t+\mathbf{B}_4 \overline{\mathbf{y}}_4^t\right\|_{F}=\mathcal{O}(1 / t). 
\end{align*}
Since this optimization problem is equivalent to the main optimization problem in (\ref{optimization_main_eq}), we can conclude that 
\begin{align*}
        & \,\,\,\,\,\,\,\,\ \left|f\left(\overline{\mathbf{x}}^t\right)-f\left(\mathbf{x}^*\right)\right|=\mathcal{O}(1 / t),\\
          &\left\|\mathbf{A}_1 \overline{\mathbf{x}}_1^t+\mathbf{A}_2 \overline{\mathbf{x}}_2^t+\mathbf{A}_3 \overline{\mathbf{x}}_3^t+\mathbf{A}_4 \overline{\mathbf{x}}_4^t\right\|_{F}=\mathcal{O}(1 / t). 
\end{align*}

\subsection{Estimation Error Bound Derivation}
\label{proof_estimation error}
For the proof of Theorem \ref{th:estimation error}, we introduce the following definitions for sub-Gaussianity of the graph signals.  \begin{definition}[Sub-Gaussian Random Vectors]
\label{def:sub-gaussian}
    A random vector $\vx \in \setR^{n}$ is a sub-Gaussian vector if each of its one-dimensional linear projections exhibits sub-Gaussian behavior. More precisely, $\vx$ is considered to be sub-Gaussian if there exists a constant $K > 0$ such that, for any unit vector \(\mathbf{u} \in \mathbb{R}^n\):
    \begin{align}
        \setE \left[ \exp(t \vu^\top (\vx - \setE[\vx])) \right] \leq \exp(K^2 t^2/2)\ \forall t \in \setR.
    \end{align}

    \noindent Moreover, sub-Gaussian norm of $\vx$ is defined as: 
    \begin{align}
        \norm{\vx}_{\psi_2} = \sup_{\vu \in \setS^{n-1}} \norm{\vu^\top \vx}_{\psi_2},
    \end{align}

    \noindent where \(\mathbb{S}^{n-1}\) is the unit sphere in \(\mathbb{R}^n\) and $\norm{\vu^\top \vx}_{\psi_2}$ is the sub-Gaussian norm of random variable $\vu^\top \vx$, defined in \cite{vershynin2010introduction} as follows,
       $$
\|\vu^\top \vx\|_{\psi_2}=\sup _{p \geq 1} p^{-1 / 2}\left(\mathbb{E}|\vu^\top \vx^p|\right)^{1 / p}.
$$\end{definition}
\noindent We also need to introduce the following lemma whose proof is given in \cite{vershynin2010introduction}. 
\begin{lemma}
\label{subG-lem}
    Consider a matrix $\mX \in \setR^{n \times m}$, whose columns are independent sub-Gaussian random vectors with sample covariance  matrix $\widehat{\mathbf{\Sigma}}$. Then, for every $t \geq 0$, the following inequality holds with probability at least $1 - 2\exp(-ct^2)$:
    \begin{align*}
        \norm{\widehat{\mathbf{\Sigma}} - \mathbf{\Sigma}}_2 \leq \max(\delta, \delta^2) \quad \text{where} \quad \delta = C\sqrt{\frac{n}{m}} + \frac{t}{\sqrt{m}},
    \end{align*}
    where $C = C_K$, $c = c_K > 0$ depend only on the sub-Gaussian norm $K = \max_i \| \mX_{\cdot i} \|_{\psi_2}$ of the columns. 
\end{lemma}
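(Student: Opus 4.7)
The plan is to prove this spectral-norm concentration inequality by reducing the operator-norm quantity $\|\widehat{\mathbf{\Sigma}}-\mathbf{\Sigma}\|_2$ to a supremum of scalar quadratic forms over the unit sphere, and then controlling each fixed quadratic form with a tail bound for sums of independent sub-exponential random variables. Since $\widehat{\mathbf{\Sigma}}-\mathbf{\Sigma}$ is symmetric, I would start from the identity
\begin{equation*}
\|\widehat{\mathbf{\Sigma}}-\mathbf{\Sigma}\|_2 \;=\; \sup_{\mathbf{u}\in\mathcal{S}^{n-1}}\bigl|\mathbf{u}^\top(\widehat{\mathbf{\Sigma}}-\mathbf{\Sigma})\mathbf{u}\bigr|
\;=\;\sup_{\mathbf{u}\in\mathcal{S}^{n-1}}\Bigl|\tfrac{1}{m}\sum_{i=1}^{m}\bigl[(\mathbf{u}^\top \mathbf{X}_{\cdot i})^2 - \mathbb{E}(\mathbf{u}^\top \mathbf{X}_{\cdot i})^2\bigr]\Bigr|.
\end{equation*}

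Next I would discretize the sphere. Choosing an $\epsilon$-net $\mathcal{N}_\epsilon\subset \mathcal{S}^{n-1}$ with $\epsilon=1/4$, standard volume arguments give $|\mathcal{N}_\epsilon|\le 9^n$, and a short approximation argument shows that for any symmetric matrix $\mathbf{A}$ one has $\|\mathbf{A}\|_2\le 2\sup_{\mathbf{u}\in\mathcal{N}_\epsilon}|\mathbf{u}^\top \mathbf{A}\mathbf{u}|$. This reduces the problem to bounding a finite supremum. For each fixed $\mathbf{u}\in\mathcal{N}_\epsilon$, the random variables $Z_i:=(\mathbf{u}^\top \mathbf{X}_{\cdot i})^2$ are independent, and because $\mathbf{u}^\top \mathbf{X}_{\cdot i}$ is sub-Gaussian with norm at most $K$, the $Z_i$'s are sub-exponential with $\|Z_i\|_{\psi_1}\lesssim K^2$ (this is the elementary fact that the square of a sub-Gaussian is sub-exponential). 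Bernstein's inequality then yields, for every $s\ge 0$,
\begin{equation*}
\mathbb{P}\Bigl(\bigl|\mathbf{u}^\top(\widehat{\mathbf{\Sigma}}-\mathbf{\Sigma})\mathbf{u}\bigr|\ge s\Bigr)\;\le\; 2\exp\!\bigl(-c\, m\,\min(s^2/K^4,\; s/K^2)\bigr),
\end{equation*}
with an absolute constant $c>0$.

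A union bound over $\mathcal{N}_\epsilon$ produces an extra factor $9^n$, which I absorb by asking that the exponent dominate $n\log 9$. Setting $s = K^2\max(\delta,\delta^2)/2$ with $\delta = C\sqrt{n/m}+t/\sqrt{m}$ for a sufficiently large $C=C_K$, the term $c\,m\min(s^2/K^4,s/K^2)$ becomes at least $n\log 9 + ct^2$, so the union-bound probability is at most $2\exp(-ct^2)$. Combined with the factor of $2$ from the net-to-sphere passage, this gives the claimed inequality $\|\widehat{\mathbf{\Sigma}}-\mathbf{\Sigma}\|_2\le \max(\delta,\delta^2)$ with probability at least $1-2\exp(-ct^2)$, and the constants depend only on $K$, as required.

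The main obstacle, and the reason the bound has the peculiar $\max(\delta,\delta^2)$ form rather than a single term, is the Bernstein regime switch: for small deviations the sum of $Z_i-\mathbb{E}Z_i$ behaves like a Gaussian with variance $\mathcal{O}(K^4/m)$, yielding the $\delta^2$ (sub-Gaussian) piece, while for larger deviations the heavier sub-exponential tail dominates, yielding the linear-in-$\delta$ piece. The careful bookkeeping needed is to verify that when one tunes $\delta$ to absorb the $n\log 9$ cost of the net, the Bernstein condition $s\le K^2$ (the crossover point) is equivalent to $\delta\le 1$, which is exactly where $\max(\delta,\delta^2)$ transitions between its two branches. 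Everything else—the $\epsilon$-net cardinality bound, the squared-sub-Gaussian is sub-exponential estimate, and Bernstein's inequality—is standard and can be cited without detailed recomputation.
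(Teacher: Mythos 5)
This lemma is not proved in the paper at all: it is quoted directly from \cite{vershynin2010introduction} (the covariance-estimation form of the sub-Gaussian row bound), and your sketch reproduces exactly the standard argument given there --- symmetric quadratic-form representation of the spectral norm, a $1/4$-net of cardinality at most $9^n$ with the factor-$2$ passage from net to sphere, the ``square of sub-Gaussian is sub-exponential'' step, Bernstein's inequality, and a union bound absorbed by the choice of $\delta$. The argument is sound; the only nit is that your choice $s = K^2\max(\delta,\delta^2)/2$ leaves an extra $K^2$ in the final bound, which is harmless only because the statement allows $C_K$ and $c_K$ to depend on $K$ (one absorbs $K^2$ by enlarging $C_K$ and shrinking $c_K$), so you should either take $s=\max(\delta,\delta^2)/2$ directly or note this absorption explicitly.
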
\label{sec:Estimation Error}
\begin{proof}[Proof of the Theorem \ref{th:estimation error}]
As the estimated $\widehat{\bold{L}}_{\boldsymbol{\gamma}}$ be the minimizer of the rescaled optimization problem in (\ref{rescale_optimization}), then 
\begin{align}
    & \sum^K_{k=1} \left[\frac{1}{d} \operatorname{tr}\left( {\bold{X}^{k}}^\top\widehat{\bold{L}}^{k}_{\boldsymbol{\gamma}} {\bold{X}^{k}}\right) + \gamma_{1d} \left\| \widehat{\bold{L}}^{k}_{\boldsymbol{\gamma} }-\bold{I}\odot\widehat{\bold{L}}^{k}_{\boldsymbol{\gamma}}\right\|_{F}^2  - \gamma_{2d} \operatorname{tr}\left(\log\left(\bold{I}\odot \widehat{\bold{L}}^{k}_{\boldsymbol{\gamma}}\right)\right) + \gamma_{4d} \left\|\widehat{\bold{S}}_{\boldsymbol{\gamma}}^{k}\right\|_{F}^{2} \right]+\gamma_{3d} \left\| \widehat{\bold{V}}_{\boldsymbol{\gamma}}\right\|_{2,1}\notag\\ 
    & \leq \sum^K_{k=1} \left[\frac{1}{d} \operatorname{tr}\left( {\bold{X}^{k}}^\top{\bold{L}^{k}}^* {\bold{X}^{k}}\right) + \gamma_{1d} \left\| {\bold{L}^{k}}^* -\bold{I}\odot{\bold{L}^{k
    }}^*\right\|_{F}^2  - \gamma_{2d} \operatorname{tr} \left(\log\left(\bold{I}\odot{\bold{L}^{k}}^*\right)\right)+ \gamma_{4d} \left\|{\bold{S}^{k}}^*\right\|_{F}^{2}\right]+\gamma_{3d} \left\| {\bold{V}}^*\right\|_{2,1}\ . 
\end{align}
Defining $\widehat{\bold{\Sigma}}^{k}= \frac{1}{d} \bold{X}^{k} {\bold{X}^{k}}^{\top}$, and we can rewrite the above inequality as,
\begin{align}
   & \sum^K_{k=1} \operatorname{tr}\left( \left(\widehat{\bold{L}}_{\boldsymbol{\gamma}}^{k}-{\bold{L}^{k}}^*\right) \widehat{\bold{\Sigma}}^{k}\right) + \gamma_{1d} \left\| \widehat{\bold{L}}^{k}_{\boldsymbol{\gamma} }-\bold{I}\odot\widehat{\bold{L}}^{k}_{\boldsymbol{\gamma}}\right\|_{F}^2 -  \gamma_{2d} \operatorname{tr}\left(\log\left(\bold{I}\odot \widehat{\bold{L}}^{k}_{\boldsymbol{\gamma}}\right)\right)- \gamma_{1d} \left\| {\bold{L}^{k}}^*-\bold{I}\odot {\bold{L}^{k}}^*\right\|_{F}^2 \notag \\
    & + \gamma_{2d} \operatorname{tr} \left(\log\left(\bold{I}\odot {\bold{L}^{k}}^*\right)\right)\leq  \gamma_{3d} \left\{\left\| {\bold{V}}^*\right\|_{2,1}- \left\| \widehat{\bold{V}}_{\boldsymbol{\gamma}}\right\|_{2,1}\right\} + \gamma_{4d} \sum^K_{k=1} \left\{\left\|{\bold{S}^{k}}^*\right\|_{F}^{2}- \left\|\widehat{\bold{S}}_{\boldsymbol{\gamma}}^{k}\right\|_{F}^{2}\right\}  \notag .
\end{align}
If ${\mathbf{\Sigma}^{k}}^*$ is the true covariance matrix corresponding to view $k$ then we  further have,
\begin{align}
\label{main-ineq}
    & \sum^K_{k=1}\left[ \operatorname{tr}\left( \left(\widehat{\bold{L}}_{\boldsymbol{\gamma}}^{k}-{\bold{L}^{k}}^{*}\right) {\bold{\Sigma}^{k}}^*\right) + \gamma_{1d} \left\| \widehat{\bold{L}}^{k}_{\boldsymbol{\gamma} }-\bold{I}\odot\widehat{\bold{L}}^{k}_{\boldsymbol{\gamma}}\right\|_{F}^2
     -  \gamma_{2d} \operatorname{tr}\left(\log\left(\bold{I}\odot \widehat{\bold{L}}^{k}_{\boldsymbol{\gamma}}\right)\right)- \gamma_{1d} \left\| {\bold{L}^{k}}^* -\bold{I}\odot {\bold{L}^{k
    }}^* \right\|_{F}^2 +\gamma_{2d} \operatorname{tr} \left( \log\left(\bold{I} \odot {\bold{L}^{k}}^{*}\right) \right)\right]\notag\\
    & \leq \gamma_{3d} \left( \left\| {\bold{V}}^* \right\|_{2,1} - \left\| \widehat{\bold{V}}_{\boldsymbol{\gamma}} \right\|_{2,1} \right)+\gamma_{4d} \sum^K_{k=1} \left\{\left\|{\bold{S}^{k}}^*\right\|_{F}^{2}- \left\|\widehat{\bold{S}}_{\boldsymbol{\gamma}}^{k}\right\|_{F}^{2}\right\}+  \sum_{k=1}^{K} \operatorname{tr} \left( \left({\bold{L}^{k}}^{*} - \widehat{\bold{L}}_{\boldsymbol{\gamma}}^{k}\right) \left(\widehat{\bold{\Sigma}}^{k} - {\bold{\Sigma}^{k}}^{*}\right) \right).
\end{align}
For a fixed view $k$, let us define
\begin{align*}
 G_k(\mathbf{L}^{k}): & =\operatorname{tr}\left(\mathbf{L}^{k} \boldsymbol{\Sigma}^k\right)+\gamma_{1 d}\left\|\mathbf{L}^{k}-\mathbf{I} \odot \mathbf{L}^{k}\right\|_{F}^2 -\gamma_{2 d} \operatorname{tr}(\log (\mathbf{I} \odot \mathbf{L}^{k})).   
\end{align*}
A first-order Taylor expansion at $\mathbf{L}^k$ gives the exact decomposition
$$
G_k\left(\mathbf{L}^k+\mathbf{\Delta}^k\right)-G_k\left(\mathbf{L}^k\right)=\left\langle\nabla G_k\left(\mathbf{L}^k\right), \mathbf{\Delta}^k\right\rangle+\mathcal{R}_k\left(\mathbf{L}^k ; \mathbf{\Delta}^k\right),
$$
where the remainder along the segment $\mathbf{L}^k(\tau):=\mathbf{L}^k+\tau \mathbf{\Delta}^k, \tau \in[0,1]$ $$\mathcal{R}_k\left(\mathbf{L}^k ; \mathbf{\Delta}^k\right)=\int_0^1(1-\tau)\left\langle\mathbf{\Delta}^k, \boldsymbol{\nabla}^2 G_k\left(\mathbf{L}^k(\tau)\right) \mathbf{\Delta}^k\right\rangle d \tau.$$ Now, the gradient calculated for each entry will be 
$$
\left[\nabla G_k(\mathbf{L}^{k})\right]_{i j}= \begin{cases}\Sigma_{i j}^k+2 \gamma_{1 d} L^{k}_{i j}, & i \neq j, \\ \Sigma_{i i}^k-\frac{\gamma_{2 d}}{L_{i i}^k}, & i=j.\end{cases}
$$
The quadratic expression associated with the second derivative is given as:
$$\left\langle\mathbf{\Delta}^{k}, \mathbf{\nabla}^2 G_k(\mathbf{L}^{k}) \mathbf{\Delta}^{k}\right\rangle=2 \gamma_{1 d} \sum_{i \neq j} {\Delta_{i j}^k}^2+\gamma_{2 d} \sum_{i=1}^n \frac{{\Delta_{i i}^k}^2}{{L_{i i}^k}^2}.$$
Consequently, along the path $\mathbf{L}^k(\tau)$ we have
$$
\left\langle\mathbf{\Delta}^k, \mathbf{\nabla}^2 G_k\left(\mathbf{L}^k(\tau)\right) \mathbf{\Delta}^k\right\rangle \geq \mu_k(\tau)\left\|\mathbf{\Delta}^k\right\|_F^2,  
$$
$\mu_k(\tau):=\min \left\{2 \gamma_{1 d}, \min _{1 \leq i \leq n} \frac{\gamma_{2 d}}{\left(L_{i i}^k+\tau \Delta_{i i}^k\right)^2}\right\}$.  Using assumption $(A3)$, we have     $$
\mu_k(\tau) \geq \mu_k:=\min \left\{2 \gamma_{1 d}, \frac{\gamma_{2 d}}{M_k^2}\right\}.
$$ 
Since $\int_0^1(1-\tau) d \tau=\frac{1}{2}$, the lower bound for each view is
$$
G_k\left(\mathbf{L}^k+\mathbf{\Delta}^k\right)-G_k\left(\mathbf{L}^k\right) \geq\left\langle \boldsymbol{\nabla} G_k\left(\mathbf{L}^k\right), \mathbf{\Delta}^k\right\rangle+\frac{\mu_k}{2}\left\|\mathbf{\Delta}^k\right\|_F^2.
$$
Substituting ${\mathbf{L}^{k}}^{*}$ for $\mathbf{L}^{k}$ and  $\mathbf{\Delta}^{k}= \widehat{\mathbf{L}}_{\boldsymbol{\gamma}}^k-{\mathbf{L}^{k}}^{*}$ and summing over $k=1, \ldots, K$ and denoting $\mu:=\min _k \mu_k$, we obtain
\begin{align}
\label{G_k-equation}
& \sum_{k=1}^K\left(G_k\left(\widehat{\mathbf{L}}_{\boldsymbol{\gamma}}^k\right)-G_k\left({\mathbf{L}^{k}}^{*}\right)\right)  \geq \sum_{k=1}^K\left\langle\boldsymbol{\nabla} G_k\left({\mathbf{L}^{k}}^{*}\right), \widehat{\mathbf{L}}_{\boldsymbol{\gamma}}^k-{\mathbf{L}^k}^{*}\right\rangle +\frac{\mu}{2} \sum_{k=1}^K\left\|\widehat{\mathbf{L}}_{\boldsymbol{\gamma}}^k-{\mathbf{L}^k}^{*}\right\|_F^2 .
\end{align}
Applying Cauchy-Schwarz inequality to each term gives the magnitude bound
\begin{align*}
   &\left|\left\langle\boldsymbol{\nabla} G_k\left({\mathbf{L}^k}^{*}\right), \mathbf{\Delta}^k\right\rangle\right| \leq\left(\left\|{\boldsymbol{\Sigma}^k}^*\right\|_F+\gamma_{2 d}\left\|\left(\mathbf{I} \odot {\mathbf{L}^k}^{*}\right)^{-1}\right\|_F\right)\left\|\mathbf{\Delta}^k\right\|_F +2 \gamma_{1 d}\left\|{\mathbf{L}^k}^{*}-\mathbf{I} \odot {\mathbf{L}^k}^{*}\right\|_F\left\|\mathbf{\Delta}^k\right\|_F .
\end{align*}
Summing over $k=1,2,\ldots, K$, we get
\begin{align*}
    \sum_{k=1}^K\left\langle\boldsymbol{\nabla} G_k\left({\mathbf{L}^k}^{*}\right), \mathbf{\Delta}^k\right\rangle & \geq-\sum_{k=1}^K C_k\left\|\mathbf{\Delta}^k\right\|_F \geq-\left(\max _{1 \leq k \leq K} C_k\right) \sum_{k=1}^K\left\|\mathbf{\Delta}^k\right\|_F, 
\end{align*}
where the constant $
C_k:=\left\|{\mathbf{\Sigma}^k}^{*}\right\|_F+2 \gamma_{1 d}\left\|{\mathbf{L}^k}^{*}-\mathbf{I} \odot {\mathbf{L}^k}^{*}\right\|_F+\gamma_{2 d}\left\|\left(\mathbf{I} \odot {\mathbf{L}^k}^{*}\right)^{-1}\right\|_F
$. Again applying Cauchy-Schwarz inequality and using the fact $\|\mathbf{\Delta}\|_F^2:=\sum_{k=1}^K\left\|\mathbf{\Delta}^k\right\|_F^2$, we have \begin{align}
  \label{G_k_Equation 2}\sum_{k=1}^K\left\langle\boldsymbol{\nabla} G_k\left({\mathbf{L}^k}^{*}\right), \mathbf{\Delta}^k\right\rangle \geq -\left(\max _{1 \leq k \leq K} C_k\right) \sqrt{K}\|\mathbf{\Delta}\|_F.  
\end{align}
Combining equations (\ref{G_k-equation}) and (\ref{G_k_Equation 2}) and denoting $C^{\prime}= \max _{1 \leq k \leq K} C_k$
\begin{align*}
\sum_{k=1}^K\left(G_k\left(\widehat{\mathbf{L}}_{\boldsymbol{\gamma}}^k\right)-G_k\left({\mathbf{L}^k}^{*}\right)\right) & \geq -C^{\prime} \sqrt{K}\left\|\widehat{\mathbf{L}}_{\boldsymbol{\gamma}}^k-{\mathbf{L}^k}^{*}\right\|_F +\frac{\mu}{2} \sum_{k=1}^K\left\|\widehat{\mathbf{L}}_{\boldsymbol{\gamma}}^k-{\mathbf{L}^k}^{*}\right\|_F^2 .
\end{align*}For the penalizing term, let us write the index set as a disjoint union
$\{1, \ldots, n\}=H \cup H^c$
where $H=\left\{j:\left\|\mathbf{V}_{\cdot j}^*\right\|_2 \neq 0\right\}$ with cardinality $|H|=h \ll n$. For  notational simplicity, we use $\widehat{\mathbf{V}}_{\cdot j}$ to denote the $j$-th column of $\widehat{\mathbf{V}}_{\boldsymbol{\gamma}}$. Then  
$$
\begin{aligned}
\left\|\mathbf{V}^*\right\|_{2,1}-\|\widehat{\mathbf{V}}_{\boldsymbol{\gamma}}\|_{2,1}
& = \sum_{j \in H}\left\|\mathbf{V}_{\cdot j}^*\right\|_2-\left(\sum_{j \in H}\left\|\widehat{\mathbf{V}}_{\cdot j}\right\|_2+\sum_{j \in H^c}\left\|\widehat{\mathbf{V}}_{\cdot j}\right\|_2\right) \\
& =\sum_{j \in H}\left(\left\|\mathbf{V}_{\cdot j}^*\right\|_2-\left\|\widehat{\mathbf{V}}_{\cdot j}\right\|_2\right)-\underbrace{\sum_{j \in H^c}\left\|\widehat{\mathbf{V}}_{\cdot j}\right\|_2}_{\geq 0} .\end{aligned}
$$
Since the second term is non-negative, we can drop it to get $$
\left\|\mathbf{V}^*\right\|_{2,1}-\|\widehat{\mathbf{V}}\|_{2,1} \leq \sum_{j \in H}\left(\left\|\mathbf{V}_{\cdot j}^*\right\|_2-\left\|\widehat{\mathbf{V}}_{\cdot j}\right\|_2\right) .
$$ Using the triangle inequality, we have \begin{align*}
&  \,\,\,\,\,\,\,\,\,\,\,\,\,\,\ \left\|\mathbf{V}_{\cdot j}^*\right\|_2-\left\|\widehat{\mathbf{V}}_{\cdot j}\right\|_2 \leq\left\|\mathbf{V}_{\cdot j}^*-\widehat{\mathbf{V}}_{\cdot j}\right\|_2  \\
& \Longrightarrow\,\,\,\, \left\|\mathbf{V}^*\right\|_{2,1}-\|\widehat{\mathbf{V}}_{\boldsymbol{\gamma}}\|_{2,1} \leq \sum_{j \in H}\left\|\widetilde{\mathbf{V}}_{\cdot j}\right\|_2,
\end{align*}
 where $\widetilde{\mathbf{V}}= \widehat{\mathbf{V}}_{\boldsymbol{\gamma}}-\mathbf{V}^*$. The well-known Cauchy–Schwarz inequality provides us the following bound,\begin{align*}
     \sum_{j \in H}\left\|\widetilde{ \mathbf{V}}_{\cdot j}\right\|_2 & \leq \sqrt{h}\left(\sum_{j \in H}\left\|\widetilde{ \mathbf{V}}_{\cdot j}\right\|_2^2\right)^{1 / 2} \leq \sqrt{h}\|\widetilde{ \mathbf{V}}\|_F \leq \sqrt{h}\|\mathbf{\Delta}\|_F.
     \end{align*}
For the term dependent on the data, let us define the events
\[
E_k \;:=\;\left\{
\left\|\widehat{\mathbf{\Sigma}}^{k}-{\mathbf{\Sigma}^{k}}^{*}\right\|_F
\le \frac{n}{\sqrt d}\,C^{k}
\right\},\qquad k=1,\dots,K.
\] Using Lemma \ref{subG-lem}  with $t=a \sqrt{n}$ where $a \geq 0$ and $d \geq\left(C_K+a\right)^2 n$, we have  \(\mathbb{P}(E_k^{\mathrm c}) \le 2 e^{-c^{k} a^2 n}\). Using Boole’s inequality,
\begin{align}
\mathbb{P}\!\left(\bigcup_{k=1}^K E_k^{\mathrm c}\right)
&\le \sum_{k=1}^K \mathbb{P}(E_k^{\mathrm c})
\;\le\; 2K\,e^{-c^{k} a^2 n}.
\end{align} 
Taking complements, we get the bound
\begin{equation}
\mathbb{P}\!\left(\bigcap_{k=1}^K E_k\right) = 1- \mathbb{P}\!\left(\bigcup_{k=1}^K E_k^{\mathrm c}\right)\;\ge\; 1 - 2K\,e^{-c^k a^2 n}.
\label{eq:simul}
\end{equation}
If we choose the constant $a \geq \sqrt{\frac{\log (2 K)}{c^k n}}$, then the above probability becomes a valid one. Using Cauchy–Schwarz
inequality and the well-known result ,
$
\|\mathbf{A}\|_F \leq \sqrt{d}\|\mathbf{A}\|_2,
$ for any matrix $\mathbf{A} \in \mathbb{R}^{d \times d}$, we derive the following bound,
\begin{align}
      & \sum_{k=1}^{K} \operatorname{tr} \left(\left( \bold{L}^{k^{*}}-\widehat{\bold{L}}_{\boldsymbol{\gamma}}^{k}\right)\left(\widehat{\mathbf{\Sigma}}^{k}- \mathbf{\Sigma}^{k^*}\right)\right) \leq \sum_{k=1}^{K} \left\|\widehat{\bold{L}}_{\boldsymbol{\gamma}}^{k}- \bold{L}^{k^{*}}\right\|_{F} \left\|\widehat{\mathbf{\Sigma}}^{k}- \mathbf{\Sigma}^{k^*}\right\|_{F} \leq \sqrt{d} \sum_{k=1}^{K}\left\|\widehat{\bold{L}}_{\boldsymbol{\gamma}}^{k}- \bold{L}^{k^{*}}\right\|_{F} \left\|\widehat{\mathbf{\Sigma}}^{k}- \mathbf{\Sigma}^{k^*}\right\|_{2}\notag \\
      & \leq  \frac{n}{\sqrt{d}}\sum_{k=1}^{K} C^{k} \left\|\widehat{\bold{L}}_{\boldsymbol{\gamma}}^{k}- \bold{L}^{k^{*}}\right\|_{F} \leq \frac{Kn}{\sqrt{d}}\widetilde{C}^{k} \left\|\widehat{\bold{L}}_{\boldsymbol{\gamma}}- \bold{L}^{*}\right\|_{F}\ , 
  \end{align}
  with probability at least $1 - 2K\,e^{-c^k a^2 n}$ and taking $\widetilde{C}^{k}=\operatorname{max}\left\{C^{1},C^{2},\ldots,C^{K}\right\}$. Thus, we have 
$$
\gamma_{4 d} \sum_{k=1}^K\left\{\left\|\mathbf{S}^{k^ *}\right\|_F^2-\left\|\widehat{\mathbf{S}}_{\boldsymbol{\gamma}}^{k}\right\|_F^2\right\} \leq \gamma_{4 d} \sum_{k=1}^K\left\|\mathbf{S}^{k^ *}\right\|_F^2=: \gamma_{4 d} C_{S^*}.
$$
Combining all of the inequalities in  (\ref{main-ineq}) we have,
\begin{align}
& -C^{\prime} \sqrt{K}\left\|\widehat{\mathbf{L}}_{\boldsymbol{\gamma}}^k-\mathbf{L}^*\right\|_F+ \frac{\mu}{2} \sum_{k=1}^K\left\|\widehat{\mathbf{L}}_{\boldsymbol{\gamma}}^k-\mathbf{L}^k\right\|_F^2  \leq \frac{K n}{\sqrt{d}} \widetilde{C}^k\left\|\widehat{\mathbf{L}}_{\boldsymbol{\gamma}}-\mathbf{L}^*\right\|_F+\gamma_{3d}\sqrt{h}\left\|\widehat{\mathbf{L}}_{\boldsymbol{\gamma}}^k-\mathbf{L}^*\right\|_F+ \gamma_{4 d} C_{S^*}.\end{align}
We can write it as a quadratic inequality as $a x^2 \leq b x+c$ with    $a= \frac{\mu}{2}, b=C^{\prime} \sqrt{K}+\gamma_{3 d} \sqrt{h}+\frac{K n}{\sqrt{d}} \widetilde{C},  c=\gamma_{4 d} C_{S^*}$ and $x= \left\|\widehat{\mathbf{L}}_{\boldsymbol{\gamma}}^k-\mathbf{L}^*\right\|_F=\left\|\mathbf{\Delta}\right\|_{F}$.  The positive root of $a x^2-b x-c=0$ gives the upper bound:\begin{align}
    \|\mathbf{\Delta}\|_F \leq \frac{b+\sqrt{b^2+4ac}}{2a}.
    \end{align}
Substituting the values of $a$, $b$ and $c$, we finally get the following inequality,
\begin{align}
\|\mathbf{\Delta}\|_F \leq \frac{2K n}{\mu\sqrt{d}} \widetilde{C}+\frac{2}{\mu}\left(C^{\prime} \sqrt{K}+\gamma_{3 d} \sqrt{h}+\sqrt{\frac{ \mu\gamma_{4 d}C_{S^*}}{2}}\right).
\end{align}
    
\end{proof}

\bib


@article{dong2016learning,
  title={Learning Laplacian matrix in smooth graph signal representations},
  author={Dong, Xiaowen and Thanou, Dorina and Frossard, Pascal and Vandergheynst, Pierre},
  journal={IEEE Transactions on Signal Processing},
  volume={64},
  number={23},
  pages={6160--6173},
  year={2016},
  publisher={IEEE}
}

@article{karaaslanli2025multiview,
  title={Multiview Graph Learning with Consensus Graph},
  author={Karaaslanli, Abdullah and Aviyente, Selin},
  journal={IEEE Transactions on Signal and Information Processing over Networks},
  year={2025},
  publisher={IEEE}
}

@article{pircalabelu2020community,
  title={Community-based group graphical lasso},
  author={Pircalabelu, Eugen and Claeskens, Gerda},
  journal={The Journal of Machine Learning Research},
  volume={21},
  number={1},
  pages={2406--2437},
  year={2020},
  publisher={JMLRORG}
}

@article{kumar2020unified,
  title={A unified framework for structured graph learning via spectral constraints},
  author={Kumar, Sandeep and Ying, Jiaxi and Cardoso, Jos{\'e} Vin{\'\i}cius de M and Palomar, Daniel P},
  journal={Journal of Machine Learning Research},
  volume={21},
  number={22},
  pages={1--60},
  year={2020}
}

@article{hao2018simultaneous,
  title={Simultaneous clustering and estimation of heterogeneous graphical models},
  author={Hao, Botao and Sun, Will Wei and Liu, Yufeng and Cheng, Guang},
  journal={Journal of Machine Learning Research},
  year={2018}
}

@article{tarzanagh2018estimation,
  title={Estimation of graphical models through structured norm minimization},
  author={Tarzanagh, Davoud Ataee and Michailidis, George},
  journal={Journal of machine learning research},
  volume={18},
  number={1},
  year={2018}
}

@article{tan2014learning,
  title={Learning Graphical Models With Hubs},
  author={Tan, Kean Ming and London, Palma and Mohan, Karthik and Lee, Su-In and Fazel, Maryam and Witten, Daniela},
  journal={Journal of Machine Learning Research},
  volume={15},
  pages={3297--3331},
  year={2014}
}

@article{gheche2020multilayer,
  title={Multilayer clustered graph learning},
  author={Gheche, Mireille El and Frossard, Pascal},
  journal={arXiv preprint arXiv:2010.15456},
  year={2020}
}

@article{tsai2022joint,
  title={Joint Gaussian graphical model estimation: A survey},
  author={Tsai, Katherine and Koyejo, Oluwasanmi and Kolar, Mladen},
  journal={Wiley Interdisciplinary Reviews: Computational Statistics},
  volume={14},
  number={6},
  pages={e1582},
  year={2022},
  publisher={Wiley Online Library}
}

@article{danaher2014joint,
  title={The joint graphical lasso for inverse covariance estimation across multiple classes},
  author={Danaher, Patrick and Wang, Pei and Witten, Daniela M},
  journal={Journal of the Royal Statistical Society Series B: Statistical Methodology},
  volume={76},
  number={2},
  pages={373--397},
  year={2014},
  publisher={Oxford University Press}
}

@article{zhang2024graph,
  title={A Graph-Assisted Framework For Multiple Graph Learning},
  author={Zhang, Xiang and Wang, Qiao},
  journal={IEEE Transactions on Signal and Information Processing over Networks},
  year={2024},
  publisher={IEEE}
}

@article{guo2015estimating,
  title={Estimating heterogeneous graphical models for discrete data with an application to roll call voting},
  author={Guo, Jian and Cheng, Jie and Levina, Elizaveta and Michailidis, George and Zhu, Ji},
  journal={The annals of applied statistics},
  volume={9},
  number={2},
  pages={821},
  year={2015}
}

@inproceedings{alwardat2025co,
  title={Co-Hub Node-Based Multiview Graph Learning},
  author={Alwardat, Mohammad and Aviyente, Selin},
  booktitle={2025 IEEE Statistical Signal Processing Workshop (SSP)},
  pages={1--5},
  year={2025},
  organization={IEEE}
}

@article{zhang2023graph,
  title={Graph Learning Across Data Silos},
  author={Zhang, Xiang and Wang, Qiao},
  journal={arXiv preprint arXiv:2301.06662},
  year={2023}
}

@article{peng2009partial,
  title={Partial correlation estimation by joint sparse regression models},
  author={Peng, Jie and Wang, Pei and Zhou, Nengfeng and Zhu, Ji},
  journal={Journal of the American Statistical Association},
  volume={104},
  number={486},
  pages={735--746},
  year={2009},
  publisher={Taylor \& Francis}
}

@article{meinshausen2006high,
  title={High-Dimensional Graphs and Variable Selection with the Lasso},
  author={Meinshausen, Nicolai and B{\"u}hlmann, Peter},
  journal={The Annals of Statistics},
  pages={1436--1462},
  year={2006},
  publisher={JSTOR}
}

@article{parikh2014proximal,
  title={Proximal algorithms},
  author={Parikh, Neal and Boyd, Stephen and others},
  journal={Foundations and trends{\textregistered} in Optimization},
  volume={1},
  number={3},
  pages={127--239},
  year={2014},
  publisher={Now Publishers, Inc.}
}

@article{yuan2023joint,
  title={Joint Network Topology Inference via Structural Fusion Regularization},
  author={Yuan, Yanli and Guo, Kun and Xiong, Zehui and Quek, Tony QS and others},
  journal={IEEE Transactions on Knowledge and Data Engineering},
  year={2023},
  publisher={IEEE}
}

@article{ortiz2025learning,
  title={Learning Graph Filters for Structure-Function Coupling based Hub Node Identification},
  author={Ortiz-Bouza, Meiby and Vu, Duc and Karaaslanli, Abdullah and Aviyente, Selin},
  journal={IEEE Transactions on Signal and Information Processing over Networks},
  year={2025},
  publisher={IEEE}
}

@article{friedman2008sparse,
  title={Sparse inverse covariance estimation with the graphical lasso},
  author={Friedman, Jerome and Hastie, Trevor and Tibshirani, Robert},
  journal={Biostatistics},
  volume={9},
  number={3},
  pages={432--441},
  year={2008},
  publisher={Oxford University Press}
}

@inproceedings{kalofolias2016learn,
  title={How to learn a graph from smooth signals},
  author={Kalofolias, Vassilis},
  booktitle={Artificial intelligence and statistics},
  pages={920--929},
  year={2016},
  organization={PMLR}
}

@article{gao2021smooth,
  title={Smooth graph learning for functional connectivity estimation},
  author={Gao, Siyuan and Xia, Xinyue and Scheinost, Dustin and Mishne, Gal},
  journal={NeuroImage},
  volume={239},
  pages={118289},
  year={2021},
  publisher={Elsevier}
}

@article{gan2019bayesian,
  title={Bayesian joint estimation of multiple graphical models},
  author={Gan, Lingrui and Yang, Xinming and Narisetty, Naveen and Liang, Feng},
  journal={Advances in Neural Information Processing Systems},
  volume={32},
  year={2019}
}

@article{hosseini2016learning,
  title={Learning sparse gaussian graphical models with overlapping blocks},
  author={Hosseini, Mohammad Javad and Lee, Su-In},
  journal={Advances in neural information processing systems},
  volume={29},
  year={2016}
}

@inproceedings{marlin2009sparse,
  title={Sparse Gaussian graphical models with unknown block structure},
  author={Marlin, Benjamin M and Murphy, Kevin P},
  booktitle={Proceedings of the 26th Annual International Conference on Machine Learning},
  pages={705--712},
  year={2009}
}

@article{zhao2019optimization,
  title={Optimization algorithms for graph Laplacian estimation via ADMM and MM},
  author={Zhao, Licheng and Wang, Yiwei and Kumar, Sandeep and Palomar, Daniel P},
  journal={IEEE Transactions on Signal Processing},
  volume={67},
  number={16},
  pages={4231--4244},
  year={2019},
  publisher={IEEE}
}

@article{xu2022meta,
  title={Meta-connectomic analysis maps consistent, reproducible, and transcriptionally relevant functional connectome hubs in the human brain},
  author={Xu, Zhilei and Xia, Mingrui and Wang, Xindi and Liao, Xuhong and Zhao, Tengda and He, Yong},
  journal={Communications Biology},
  volume={5},
  number={1},
  pages={1056},
  year={2022},
  publisher={Nature Publishing Group UK London}
}

@article{glasser2016multi,
  title={A multi-modal parcellation of human cerebral cortex},
  author={Glasser, Matthew F and Coalson, Timothy S and Robinson, Emma C and Hacker, Carl D and Harwell, John and Yacoub, Essa and Ugurbil, Kamil and Andersson, Jesper and Beckmann, Christian F and Jenkinson, Mark and others},
  journal={Nature},
  volume={536},
  number={7615},
  pages={171--178},
  year={2016},
  publisher={Nature Publishing Group}
}

@book{newman2018networks,
  title={Networks},
  author={Newman, Mark},
  year={2018},
  publisher={Oxford university press}
}

@article{cole2010identifying,
  title={Identifying the brain's most globally connected regions},
  author={Cole, Michael W and Pathak, Sudhir and Schneider, Walter},
  journal={Neuroimage},
  volume={49},
  number={4},
  pages={3132--3148},
  year={2010},
  publisher={Elsevier}
}

@article{tomasi2011association,
  title={Association between functional connectivity hubs and brain networks},
  author={Tomasi, Dardo and Volkow, Nora D},
  journal={Cerebral cortex},
  volume={21},
  number={9},
  pages={2003--2013},
  year={2011},
  publisher={Oxford University Press}
}

@article{de2013connectivity,
  title={The connectivity of functional cores reveals different degrees of segregation and integration in the brain at rest},
  author={De Pasquale, Francesco and Sabatini, Umberto and Della Penna, Stefania and Sestieri, Carlo and Caravasso, Chiara Falletta and Formisano, Rita and P{\'e}ran, Patrice},
  journal={Neuroimage},
  volume={69},
  pages={51--61},
  year={2013},
  publisher={Elsevier}
}

@article{navarro2022jointb,
  title={Joint Network Topology Inference Via a Shared Graphon Model},
  author={Navarro, Madeline and Segarra, Santiago},
  journal={IEEE Transactions on Signal Processing},
  volume={70},
  pages={5549--5563},
  year={2022},
  publisher={IEEE}
}

@inproceedings{kalofolias2017learning,
  title={Learning time varying graphs},
  author={Kalofolias, Vassilis and Loukas, Andreas and Thanou, Dorina and Frossard, Pascal},
  booktitle={2017 IEEE International Conference on Acoustics, Speech and Signal Processing (ICASSP)},
  pages={2826--2830},
  year={2017},
  organization={Ieee}
}

@article{guo2011joint,
  title={Joint estimation of multiple graphical models},
  author={Guo, Jian and Levina, Elizaveta and Michailidis, George and Zhu, Ji},
  journal={Biometrika},
  volume={98},
  number={1},
  pages={1--15},
  year={2011},
  publisher={Oxford University Press}
}

@article{mohan2014node,
  title={Node-based learning of multiple Gaussian graphical models},
  author={Mohan, Karthik and London, Palma and Fazel, Maryam and Witten, Daniela and Lee, Su-In},
  journal={The Journal of Machine Learning Research},
  volume={15},
  number={1},
  pages={445--488},
  year={2014},
  publisher={JMLR. org}
}

@article{ma2016joint,
  title={Joint structural estimation of multiple graphical models},
  author={Ma, Jing and Michailidis, George},
  journal={The Journal of Machine Learning Research},
  volume={17},
  number={1},
  pages={5777--5824},
  year={2016},
  publisher={JMLR. org}
}

@article{lee2015joint,
  title={Joint estimation of multiple precision matrices with common structures},
  author={Lee, Wonyul and Liu, Yufeng},
  journal={The Journal of Machine Learning Research},
  volume={16},
  number={1},
  pages={1035--1062},
  year={2015},
  publisher={JMLR. org}
}

@article{huang2015joint,
  title={Joint learning of multiple sparse matrix Gaussian graphical models},
  author={Huang, Feihu and Chen, Songcan},
  journal={IEEE transactions on neural networks and learning systems},
  volume={26},
  number={11},
  pages={2606--2620},
  year={2015},
  publisher={IEEE}
}

@article{navarro2022joint,
  title={Joint inference of multiple graphs from matrix polynomials},
  author={Navarro, Madeline and Wang, Yuhao and Marques, Antonio G and Uhler, Caroline and Segarra, Santiago},
  journal={The Journal of Machine Learning Research},
  volume={23},
  number={1},
  pages={3302--3336},
  year={2022},
  publisher={JMLRORG}
}

@article{jeong2001lethality,
  title={Lethality and centrality in protein networks},
  author={Jeong, Hawoong and Mason, Sean P and Barab{\'a}si, A-L and Oltvai, Zoltan N},
  journal={Nature},
  volume={411},
  number={6833},
  pages={41--42},
  year={2001},
  publisher={Nature Publishing Group UK London}
}

@inproceedings{liu2011learning,
  title={Learning Scale Free Networks by Reweighted $\ell_1$ Regularization},
  author={Liu, Qiang and Ihler, Alexander},
  booktitle={Proceedings of the Fourteenth International Conference on Artificial Intelligence and Statistics},
  pages={40--48},
  year={2011},
  organization={JMLR Workshop and Conference Proceedings}
}

@article{defazio2012convex,
  title={A convex formulation for learning scale-free networks via submodular relaxation},
  author={Defazio, Aaron and Caetano, Tib{\'e}rio},
  journal={Advances in neural information processing systems},
  volume={25},
  year={2012}
}

@article{kim2019bayesian,
  title={Bayesian inference of hub nodes across multiple networks},
  author={Kim, Junghi and Do, Kim-Anh and Ha, Min Jin and Peterson, Christine B},
  journal={Biometrics},
  volume={75},
  number={1},
  pages={172--182},
  year={2019},
  publisher={Oxford University Press}
}

@article{huang2024njgcg,
  title={NJGCG: A node-based joint Gaussian copula graphical model for gene networks inference across multiple states},
  author={Huang, Yun and Huang, Sen and Zhang, Xiao-Fei and Ou-Yang, Le and Liu, Chen},
  journal={Computational and Structural Biotechnology Journal},
  volume={23},
  pages={3199--3210},
  year={2024},
  publisher={Elsevier}
}

@article{mcgillivray2020estimating,
  title={Estimating sparse networks with hubs},
  author={McGillivray, Annaliza and Khalili, Abbas and Stephens, David A},
  journal={Journal of Multivariate Analysis},
  volume={179},
  pages={104655},
  year={2020},
  publisher={Elsevier}
}

@inproceedings{tandon2014learning,
  title={Learning graphs with a few hubs},
  author={Tandon, Rashish and Ravikumar, Pradeep},
  booktitle={International conference on machine learning},
  pages={602--610},
  year={2014},
  organization={PMLR}
}

@article{vershynin2010introduction,
  title={Introduction to the non-asymptotic analysis of random matrices},
  author={Vershynin, Roman},
  journal={arXiv preprint arXiv:1011.3027},
  year={2010}
}

@article{zou2007degrees,
  title={On the “degrees of freedom” of the lasso},
  author={Zou, Hui and Hastie, Trevor and Tibshirani, Robert},
  year={2007}
}

@misc{lin2015sublinearconvergenceratemultiblock,
      title={On the Sublinear Convergence Rate of Multi-Block ADMM}, 
      author={Tianyi Lin and Shiqian Ma and Shuzhong Zhang},
      year={2015},
      eprint={1408.4265},
      archivePrefix={arXiv},
      primaryClass={math.OC},
      url={https://arxiv.org/abs/1408.4265}, 
}
\end{document}